\documentclass[11pt, oneside]{article}   	
\usepackage[margin=1.0in]{geometry}                		
\geometry{letterpaper}                   		
\usepackage{graphicx}				
\usepackage{amssymb}
\usepackage{amsmath}
\usepackage{amsthm}
\usepackage{mathtools}
\usepackage{hyperref}
\usepackage{enumitem}
\usepackage{verbatim}
\usepackage{xcolor}
\usepackage[utf8]{inputenc}
\usepackage[T1]{fontenc}
\usepackage{microtype}
\usepackage{natbib}

\newtheorem{theorem}{Theorem}
\newtheorem{definition}{Definition}[section]

\newtheorem{lemma}{Lemma}
\newtheorem*{lemma*}{Lemma}
\newtheorem{claim}{Claim}[theorem]

\newcommand{\E}{\mathop{\mathbb{E}}}

\newcommand{\R}{\mathbb{R}}
\newcommand{\eps}{\epsilon}


\let\originalleft\left
\let\originalright\right
\renewcommand{\left}{\mathopen{}\mathclose\bgroup\originalleft}
\renewcommand{\right}{\aftergroup\egroup\originalright}

\renewcommand{\epsilon}{\varepsilon}

\usepackage{bbm}
\usepackage{float}
\usepackage[]{algorithm}
\usepackage[noend]{algpseudocode}
\makeatletter
\def\BState{\State\hskip-\ALG@thistlm}
\makeatother
\title{Efficient Private Algorithms for Learning Large-Margin Halfspaces}
\author{Huy L\^{e} Nguy\~{\^{e}}n\thanks{Khoury College of Computer Sciences, Northeastern University. Supported by NSF CAREER CCF-1750716. \href{mailto:hlnguyen@cs.princeton.edu}{\texttt{hlnguyen@cs.princeton.edu}}.} \and Jonathan Ullman\thanks{Khoury College of Computer Sciences, Northeastern University. Supported by NSF grants CCF-1718088, CAREER CCF-1750640, and CNS-1816028, and a Google Faculty Research Award. \href{mailto:jullman@ccs.neu.edu}{\texttt{jullman@ccs.neu.edu}}.} \and Lydia Zakynthinou\thanks{Khoury College of Computer Sciences, Northeastern University. Partly supported by a Graduate Fellowship from the Khoury College of Computer Sciences. \href{mailto:zakynthinou.l@northeastern.edu}{\texttt{zakynthinou.l@northeastern.edu}}.}}

\begin{document}

\maketitle

\begin{abstract}%
We present new differentially private algorithms for learning a large-margin halfspace. 
In contrast to previous algorithms, which are based on either differentially private simulations of the statistical query model or on private convex optimization, the sample complexity of our algorithms depends only on the margin of the data, and not on the dimension. 
We complement our results with a lower bound, showing that the dependence of our upper bounds on the margin is optimal.
\end{abstract}

\section{Introduction}
In a classification problem, we are given labeled examples from some unknown distribution, and the goal is to learn a classifier that accurately labels future examples from the same distribution.  In many applications, each of these examples represents the highly sensitive \emph{privacy} information of some individual.  Although the goal of classification is to learn about the distribution, and not about the examples {\em per se}, many natural learning algorithms have the unfortunate side effect of revealing all or part of some of the labeled examples.  For example, support vector machines represent the learned classifier as a set of support vectors, which are just labeled examples from the input!

The now-standard approach for ensuring privacy in machine learning is \emph{differential privacy (DP)}~\citep{DMNS06}, which, informally, requires that no individual labeled example in the input significantly influences the learned classifier. Starting with some of the earliest work in differential privacy~\citep{BDMN05,KLNRS08}, there is a large body of literature showing that nearly every classification problem can be solved with differential privacy, albeit with large overheads in both sample complexity and running time.  It is thus central to understand for which problems these overheads can be eliminated, and for which they are inherent.

In this paper we study the classical problem of learning a \emph{large-margin halfspace}.  That is, the examples are unit vectors $\mathbf{x} \in \mathbb{R}^d$ labeled with $y \in \{\pm1\}$, and we assume that $y = \mathrm{sign}(\langle \mathbf{w}, \mathbf{x} \rangle)$ for some unknown unit vector $\mathbf{w} \in \mathbb{R}^d$.  Further, no example falls too close to the boundary of the halfspace, meaning that
$y \cdot \langle \mathbf{w}, \mathbf{x} \rangle \geq \gamma$, where $\gamma$ is called the \emph{margin}. Any algorithm (private or non-private) for learning halfspaces over arbitrary distributions requires sample complexity growing polynomially in the dimension $d$. When $d$ is large, assuming a large margin enables learning the halfspace with sample complexity independent of $d$.

Many results in differential privacy either explicitly or implicitly give private algorithms for learning a large-margin halfspace (see the related work for a detailed discussion). \citet{BDMN05} gave a differentially private implementation of the classical Perceptron algorithm for learning a large-margin halfspace, however their implementation requires sample complexity $\mathrm{poly}(d)$, which is precisely what the large-margin assumption is meant to avoid.  A distinct line of work, beginning with~\citet{CMS11}, studies differentially private algorithms for empirical loss minimization problems.  Although learning a large-margin halfspace can be achieved via minimizing the \emph{hinge loss}, generic algorithms for differentially private loss minimization inherently require $\mathrm{poly}(d)$ samples~\citep{BUV14,BST14}.

\subsection{Results}

In this work we give two new differentially private algorithms for learning a large-margin halfspace.  The key feature of our algorithms is that the sample complexity depends only on the margin, the desired accuracy of the learner, and the desired level of privacy, and not on the dimension.  More precisely,  our sample complexity is (ignoring constants and logarithmic factors) $1 / \alpha \varepsilon \gamma^2$ where $\alpha$ is the desired error and $\varepsilon$ is the desired privacy.  In contrast, without privacy the sample complexity is roughly $1/\alpha \gamma^2$, so our sample complexity is comparable to that of non-private algorithms except when $\varepsilon$ is very small.

Our first algorithm runs in polynomial time in all the parameters and satisfies the standard notion of $(\varepsilon, \delta)$-DP.  Our second algorithm's running time grows exponentially in  the inverse-margin $1/\gamma$, but the algorithm satisfies the very strong special case of $(\varepsilon,0)$-DP (so-called \emph{pure DP}).  Our results are described in more detail in Table~\ref{table:results}. For simplicity, each of the bounds in Table~\ref{table:results} suppresses polylogarithmic factors of $\alpha,\beta,\varepsilon,\delta,\gamma$.

The main technique in both of our algorithms is to use random projections to reduce the dimensionality of the space to $\approx 1/\gamma^2$.  After projection, we can learn using either a differentially private algorithm for minimizing hinge loss or by using the exponential mechanism over a net of possible halfspaces.  We note that using either of these techniques on its own, without the projection, would fail to find an accurate classifier without $\mathrm{poly}(d)$ samples. We also note that one could apply a random projection and then run the algorithm of~\citet{BDMN05}, which would have sample complexity on the order of $1/{\alpha\eps\gamma^3}$, which would be suboptimal.

We also prove a lower bound showing that any $(\eps,0)$-differentially private algorithm for learning a large-margin halfspace (with constant classification error) requires $\Omega(1/\eps\gamma^2)$ samples (unless $d = o(1/\gamma^2)$).  This lower bound is presented in Theorem~\ref{th:LB}.

\renewcommand{\arraystretch}{1.75}
\begin{table}[t!]
\centering
\begin{tabular}{|l|c|c|c|}
\hline
	& Sample Complexity & Running Time & Privacy       \\
\hline
Theorem~\ref{th:UB_approximate} & $\frac{1}{\alpha\varepsilon\gamma^2}$               &   $\mathrm{poly}(\frac{d\log(1/\beta \delta)}{\alpha \varepsilon \gamma})$        & $(\varepsilon, \delta)$ \\
\hline
Theorem~\ref{th:UB_pure} & $\frac{1}{\alpha\varepsilon\gamma^2}$            &     $(2^{\tilde{O}(1/\gamma^2)} +d) \cdot \mathrm{poly}(\frac{\log(1/\beta)}{\alpha \varepsilon \gamma})$      & $(\varepsilon, 0)$ \\   
\hline
\end{tabular}
\caption{Sample complexity and running time bounds for our algorithms.   For simplicity, each of these bounds suppresses polylogarithmic factors of $\alpha,\beta,\varepsilon,\delta,\gamma$.}
\label{table:results}
\end{table}

\subsection{Related Work}

\citet{BDMN05} gave a differentially private implementation of the classical Perceptron algorithm, based on a general differentially private simulation of algorithms in the statistical queries model~\citep{Kearns93}. Their algorithm can be improved using more recent statistical queries algorithms by~\citet{FGV17}, but this approach still requires $\mathrm{poly}(d)$ samples. The foundational work of~\citet{KLNRS08} studied differentially private PAC learning, and gave a generic private PAC learner, but they did not consider margin-based learning guarantees. In a recent work,~\citet{BMNS19a} give a private learner for halfspaces over an arbitrary finite domain, significantly improving the dependence on the domain size by reducing the problem to the one of privately locating an approximate center point. They prove that their algorithm requires $\mathrm{poly}(d)$ samples and that this technique can not yield a better bound, but again they do not consider the large-margin assumption.

An alternative approach is to leverage algorithms for differentially private convex optimizing to identify a halfspace minimizing the hinge loss.  Differentially private convex optimization is now the subject of a large body of literature that is too large to survey here.  Notably,~\citet{BST14} gave nearly optimal algorithms for private convex optimization in the relevant setting, and showed that such algorithms necessarily require $\mathrm{poly}(d)$ samples. \citet{JainT14} gave nearly dimension-free results for minimizing \emph{generalized linear models}, which could be used for learning a large-margin halfspace. However the sample complexity obtained by using these results would be $1/{\alpha^2\eps^2\gamma^2}$, which is again significantly worse than ours.

\citet{BLR08} gave an algorithm for the related \emph{query release} problem for large-margin halfspaces---they construct a differentially private algorithm that outputs a data structure such that one can input a halfspace such that if the data has large margin with respect to that halfspace, then the structure outputs an estimate of how many points are labeled positively.  One could use such a data structure to learn a large-margin halfspace, however, their algorithm has sample complexity $\mathrm{poly}(d)$, and the resulting learning algorithm would also not be computationally efficient.

Random projections have proven to be a very useful tool in learning theory in applications that assume some kind of separability~\citep{Vempala04, Blum05}). Similar to our work, there have also been applications of random projections in differential privacy.  One example is the above query release algorithm from~\citet{BLR08}, which is conceptually similar to our pure differentially private algorithm.  In a very different setting,~\citet{BBDS13} demonstrated that certain random projection matrices \emph{automatically} preserve privacy, however there is no technical relationship between their results and ours.  \citet{KKMN12} also used the Johnson-Lindenstrauss transform to achieve better utility and computational efficiency for privately estimating distances between users, but again there is no technical relationship between their results and ours.

\section{Preliminaries}
\subsection{Learning Halfspaces}
We consider a distribution $D$ over $\mathcal{X} \times \{\pm 1\}$, where $\mathcal{X} \subseteq \mathbb{R}^d$. We denote by $\mathcal{B}^d_2(r)$ the ball in $\mathbb{R}^d$ with center $\mathbf{0}$ and radius $r$ with respect to the euclidean norm $\| \cdot \|_2$, and $\mathcal{B}^d_2(1)=\mathcal{B}^d_2$. We assume that all examples are normalized so that $\mathcal{X} = \mathcal{B}^d_2$. This assumption is without loss of generality. Since we can normalize each point in the input dataset, this operation would not affect privacy: any two neighboring datasets, as defined in the next section, would remain neighbors. As for utility, the (normalized) margin, as defined in this section, would also remained unchanged. Furthermore, if we consider $\gamma$ to be the non-normalized margin and assume $\mathcal{X}\subseteq\mathcal{B}_2^d(R)$ for some known $R$, we can still run the algorithm with the scaled margin $\gamma/R$, increasing the sample complexity to $\tilde{O}(R^2/\alpha\eps\gamma^2)$.

A linear threshold function is defined as $f_{\mathbf{w},\theta}(\mathbf{x}) = \textup{sign}(\langle \mathbf{w} ,\mathbf{x} \rangle + \theta)$, where $\mathbf{x}, \mathbf{w} \in \mathbb{R}^d$ and $\theta \in \mathbb{R}$. We assume without loss of generality that $\theta=0$ so $f_{\mathbf{w}}(\mathbf{x}) = \textup{sign}(\langle \mathbf{w} ,\mathbf{x} \rangle)$.\footnote{If $m_{\mathcal{X}}=\min_{\mathbf{x}\in\mathcal{X}}\|\mathbf{x}\|_2$ is known, we can run the algorithm with the modified points $\mathbf{\tilde{x}} = [\mathbf{x},1]\in\mathbb{R}^{d+1}$, margin $\tilde{\gamma}=\gamma\cdot\min\{1,m_{\mathcal{X}}\}/(2+2|\theta|)$, and hypothesis space $\{ [\mathbf{w},\theta] : \mathbf{w}\in\mathcal{B}_2^d \}$.}
We call a vector $\mathbf{w}$ a hypothesis. The error of a threshold function defined by hypothesis $\mathbf{w}$ on distribution $D$ is 
\[
\mathrm{err}_{D}(f_{\mathbf{w}}) = \Pr_{(\mathbf{x},y)\sim D}[f_{\mathbf{w}}(\mathbf{x}) \neq y] = \Pr_{(\mathbf{x},y)\sim D}[\textup{sign}(\langle \mathbf{w}, \mathbf{x} \rangle) \neq y] = \Pr_{(\mathbf{x},y)\sim D}[y\cdot \langle \mathbf{w}, \mathbf{x} \rangle < 0].
\]
As in the PAC (\emph{probably approximately correct}) learning model, introduced by~\cite{Valiant84}, the goal is to find a hypothesis $\mathbf{w}$ such that $\mathrm{err}_{D}(f_{\mathbf{w}}) \leq \alpha$ with probability $1-\beta$, for given parameters $\alpha$ and $\beta$. We assume that there exists a hypothesis with zero error, that is, there exists a $\mathbf{w}^*\in \mathcal{B}^d_2$ such that $y \cdot \langle \mathbf{w}^*, \mathbf{x} \rangle > 0$ $\forall (\mathbf{x}, y)$. 
More specifically, we assume that $\mathbf{w}^*$ maximizes the margin 
\[
\gamma = \min_{\mathbf{x}\in \mathcal{X}} \frac{\vert \langle \mathbf{w}^*,\mathbf{x} \rangle \vert }{\|\mathbf{w}^*\|_2 \cdot \|\mathbf{x}\|_2},
\]
 which is assumed to be known in advance. Equivalently, $\gamma \leq \vert  \cos (\mathbf{w}^*, \mathbf{x})\vert$ $\forall \mathbf{x}$, where the right hand side is the distance of a scaled point $\mathbf{x}$ from the halfspace $\langle \mathbf{w}^*, \mathbf{x} \rangle=0$.

Our goal is to design algorithms which, given enough data points drawn from a distribution $D$ over a linearly 
separable set with margin $\gamma$, return a hypothesis which has error at most $\alpha$ with respect to the distribution, with probability $1-\beta$. More formally, we aim to design an $(\alpha,\beta,\gamma)$-PAC learner with low sample complexity.

\begin{definition}[$(\alpha,\beta,\gamma)$-PAC learner]
Let $D$ be a distribution over $\mathcal{B}_2^d \times \{\pm 1\}$ such that there exists $\mathbf{w}^* \in \mathcal{B}_2^d$ for which
$\Pr_{(\mathbf{x},y) \sim D}\left[ y \langle \mathbf{w}^*,\mathbf{x} \rangle \geq \gamma \right] = 1.$ We call such a distribution $D$ a \emph{distribution with margin $\gamma$}.
An algorithm $\mathcal{A}$ is an \emph{$(\alpha,\beta,\gamma)$-PAC learner for halfspaces in $\R^d$ with margin $\gamma$ and sample complexity $n$} if, given a sample set $S \sim D^{n}$ from any distribution $D$ with margin $\gamma$, it outputs a classifier $\mathcal{A}(S) = \mathbf{\hat{w}} \in \mathcal{B}_2^d$ such that, with probability at least $1-\beta$,
\[\Pr_{(\mathbf{x},y) \sim D}\left[ y =  \mathrm{sign}(\langle \mathbf{\hat{w}},\mathbf{x} \rangle) \right] \geq 1-\alpha.\]
\end{definition}

\subsection{Differential Privacy}\label{sec:dp}
We design algorithms which draw a sample set $S$ and output a hypothesis $\mathbf{w}\in \mathbb{R}^d$. In addition to finding a good hypothesis, our algorithms must satisfy \textit{differential privacy (DP)} guarantees. Differential privacy is a property that a randomized algorithm satisfies if its output distribution does not change significantly under the change of a single data point. 

More formally, let $S, S'\in \mathcal{S}^n$ be two data sets of the same size. We say that $S,S'$ are \textit{neighbors}, denoted as $S\sim S'$, if they differ in at most one data point.
\begin{definition}[Differential Privacy,~\cite{DMNS06}]
A randomized algorithm $\mathcal{A}:\mathcal{S}^n \rightarrow \mathcal{O}$ is  {\em $(\epsilon,\delta)$-differentially private} if for all neighboring data sets $S, S'$ and all measurable $O\subseteq \mathcal{O}$,
\[\Pr[\mathcal{A}(S) \in O] \leq \exp(\eps) \Pr[\mathcal{A}(S')\in O] + \delta.\]
Algorithm $\mathcal{A}$ is {\em $(\eps,0)$-differentially private} if it satisfies the definition for $\delta=0$.
\end{definition}

A useful property of differential privacy is that it is closed under post-processing.
\begin{lemma}[Post-Processing,~\cite{DMNS06}]\label{lem:post-processing}
Let $\mathcal{A}:\mathcal{S}^n \rightarrow \mathcal{O}$ be a randomized algorithm that is $(\eps,\delta)$-differentially private.  For every (possibly randomized) $f : \mathcal{O} \to \mathcal{O}'$, $f \circ \mathcal{A}$ is $(\eps,\delta)$-differentially private.
\end{lemma}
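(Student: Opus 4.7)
The plan is to reduce to the deterministic case and then handle randomness in $f$ by averaging over its internal coins.

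First I would handle the case where $f$ is deterministic. Fix any measurable $O' \subseteq \mathcal{O}'$ and observe that the preimage $O := f^{-1}(O') = \{o \in \mathcal{O} : f(o) \in O'\}$ is measurable (by measurability of $f$). Then for any neighboring $S \sim S'$,
\[
\Pr[f(\mathcal{A}(S)) \in O'] = \Pr[\mathcal{A}(S) \in O] \leq e^{\eps}\,\Pr[\mathcal{A}(S') \in O] + \delta = e^{\eps}\,\Pr[f(\mathcal{A}(S')) \in O'] + \delta,
\]
where the middle inequality is just the $(\eps,\delta)$-DP guarantee of $\mathcal{A}$ applied to the set $O$. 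This already gives post-processing for deterministic $f$.

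Next I would reduce the randomized case to the deterministic one. Write $f$ as a deterministic function $\tilde f(o, r)$ of the input $o$ and an independent random string $r$ drawn from some distribution $\mu$ on a space $\mathcal{R}$. For each fixed $r \in \mathcal{R}$, the function $\tilde f(\cdot, r) : \mathcal{O} \to \mathcal{O}'$ is deterministic, so the deterministic case gives
\[
\Pr[\tilde f(\mathcal{A}(S), r) \in O'] \leq e^{\eps}\,\Pr[\tilde f(\mathcal{A}(S'), r) \in O'] + \delta
\]
for every $r$. I would then integrate both sides against $\mu$; by Fubini (or linearity of expectation in the discrete case) the left side becomes $\Pr[f(\mathcal{A}(S)) \in O']$, the first term on the right becomes $e^{\eps}\,\Pr[f(\mathcal{A}(S')) \in O']$, and the additive $\delta$ stays $\delta$ since $\mu$ is a probability distribution. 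This yields the desired inequality for $f \circ \mathcal{A}$.

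The only subtlety, and the only place where there is anything to check, is the measurability used in defining $\tilde f$ and invoking Fubini; this is standard and I would simply cite that without loss of generality such a representation of a randomized map exists (e.g., via a uniform $[0,1]$ seed). There is no quantitative obstacle: the proof uses the DP guarantee of $\mathcal{A}$ as a black box and preserves the parameters $(\eps,\delta)$ exactly, which is why post-processing is ``free.''
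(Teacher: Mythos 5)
Your proof is correct and is the standard argument for post-processing; the paper itself does not reprove this lemma but simply cites \cite{DMNS06}, so there is no in-paper proof to compare against. Your reduction (deterministic preimage argument, then averaging over the internal coins of $f$) is exactly the textbook derivation and preserves $(\eps,\delta)$ with no loss.
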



\section{An Efficient Private Algorithm}\label{sec:approx}

Both the algorithm of this and the next section draw a sample set $S\sim D^n$ and perform dimension reduction from a $d$-dimensional to an $m$-dimensional space, which allows them to run in the reduced space for the remainder of the execution.

\begin{algorithm}[H]
\caption{$\mathcal{A}_{\alpha,\beta,\eps,\delta,\gamma}(S)$}\label{approximate}
\begin{algorithmic}[1]
\State Choose a random matrix $A\in \mathbb{R}^{m \times d}$, where $m=O\left(\frac{\log(1/\beta_{\mathit{JL}})}{\gamma^2}\right)$, $\beta_{\mathit{JL}}=\alpha\beta^2/64n$, and $$A_{ij} = \begin{cases}
        +1/\sqrt{m} &\text{w.p.~} 1/2\\
        -1/\sqrt{m} &\text{w.p.~} 1/2
        \end{cases}$$\;
\State Define $S_A \gets \left\{\left(A\mathbf{x} / \|A\mathbf{x} \|_2,y\right) \mid (\mathbf{x},y)\in S\right\}$.\;
\State Define the hypothesis set $\mathcal{C} \gets \mathcal{B}_2^m$.\;
\State Define the $\frac{100}{86\gamma}$-Lipschitz loss function $\ell : \mathcal{C}\times (\mathcal{B}^m_2\times\{\pm1\}) \to \mathbb{R}$ as $$\ell(\mathbf{w};(\mathbf{x},y)) =\mathbbm{1}{\left\{y\cdot \langle \mathbf{w}, \mathbf{x}\rangle < \frac{96\gamma}{100}\right\}} \cdot \left( \frac{96}{86}-\frac{y\cdot \langle \mathbf{w}, \mathbf{x}\rangle}{86\gamma/100}\right).$$\;
\State Let $\mathbf{\hat{w}} \gets \mathcal{F}(S_A, \ell, (\eps,\delta), \mathcal{C})$.\;
\State Return $\mathbf{\hat{w}}^\top A$.\;
\end{algorithmic}
\end{algorithm}

Algorithm $\mathcal{F}$ is any differentially private empirical risk minimization algorithm. We can instantiate it with the noisy stochastic gradient descent algorithm of~\cite{BST14} so that it has the following guarantee. The full algorithm is presented in Appendix~\ref{app:algo} for completeness.
\begin{theorem}[\cite{BST14}]\label{th:BST}
Let sample set $D$, $L$-Lipschitz loss function $\ell$, differential privacy parameters $(\eps, \delta)$, and convex hypothesis space $\mathcal{C}$ with diameter $\|\mathcal{C}\|_2$. There exists $(\eps, \delta)$-differentially private algorithm $\mathcal{F}$, such that with probability $1-\beta/4$, its returned hypothesis $\mathbf{\hat{w}}$ satisfies
\begin{equation}\label{eq:BST}
\mathcal{L}(\mathbf{\hat{w}}; D) - \min\limits_{\mathbf{w}\in\mathcal{C} \subseteq \R^{m}} \mathcal{L}(\mathbf{w}; D) = \frac{\sqrt{m} L \|\mathcal{C}\|_2}{\eps} \cdot \mathrm{polylog}\left(n, \frac{1}{\beta},\frac{1}{\delta} \right),
\end{equation}
where $\mathcal{L}(\mathbf{w}; D) = \sum_{(\mathbf{x},y)\in D}\ell(\mathbf{w};(\mathbf{x},y))$ is the total loss of a hypothesis $\mathbf{w}$ on the data set $D$.
\end{theorem}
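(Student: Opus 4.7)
The plan is to instantiate $\mathcal{F}$ as noisy projected stochastic gradient descent (SGD) on the average empirical loss $\bar{\mathcal{L}}(\mathbf{w};D) = \mathcal{L}(\mathbf{w};D)/n$. Starting from any $\mathbf{w}_0 \in \mathcal{C}$, in each of $T$ iterations one samples an index $i_t$ uniformly from $[n]$, computes the per-example gradient $\mathbf{g}_t = \nabla\ell(\mathbf{w}_t;(\mathbf{x}_{i_t},y_{i_t}))$, adds isotropic Gaussian noise $\mathbf{z}_t \sim \mathcal{N}(\mathbf{0},\sigma^2 I_m)$, and updates $\mathbf{w}_{t+1} = \Pi_{\mathcal{C}}(\mathbf{w}_t - \eta(\mathbf{g}_t + \mathbf{z}_t))$. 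The returned hypothesis $\mathbf{\hat{w}}$ is the average iterate $\frac{1}{T}\sum_{t=1}^T \mathbf{w}_t$.

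For privacy, the $L$-Lipschitz assumption gives $\|\mathbf{g}_t\|_2 \leq L$, so swapping a single example of $D$ changes any single gradient by at most $2L$ in $\ell_2$-norm. The Gaussian mechanism with variance $\sigma^2$ per step is $(\eps_0,\delta/T)$-DP for $\eps_0 = \Theta(L\sqrt{\log(1/\delta)}/\sigma)$; combined with privacy amplification by uniform subsampling at rate $1/n$ and advanced composition (or the moments accountant) over $T$ steps, the total privacy loss is of order $\eps_0\sqrt{T\log(1/\delta)}/n$. Choosing $\sigma = \Theta(L\sqrt{T\log(1/\delta)}/(n\eps))$ brings this to $(\eps,\delta)$.

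For utility, the standard convex-Lipschitz analysis of projected SGD with unbiased stochastic gradients gives in expectation $\bar{\mathcal{L}}(\mathbf{\hat{w}};D) - \min_{\mathbf{w}\in\mathcal{C}}\bar{\mathcal{L}}(\mathbf{w};D) \leq \|\mathcal{C}\|_2\sqrt{(L^2+m\sigma^2)/T}$ when the step size is set to $\eta = \|\mathcal{C}\|_2/\sqrt{T(L^2+m\sigma^2)}$. Substituting the privacy-calibrated $\sigma$ and choosing $T = n^2$ makes the injected Gaussian term dominate and collapses the bound to $\tilde{O}(L\|\mathcal{C}\|_2\sqrt{m}/(n\eps))$ on the average loss; scaling by $n$ reproduces the claimed $\sqrt{m}L\|\mathcal{C}\|_2/\eps$ excess for $\mathcal{L}$ up to polylogarithmic factors in $n,1/\delta$. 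Converting the in-expectation bound to a $1-\beta/4$ high-probability statement can be done either via martingale concentration of the iterate averages, or by running the procedure $O(\log(1/\beta))$ times and privately selecting the best hypothesis, both of which only contribute additional polylog factors.

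The main obstacle is balancing the three sources of error---the SGD optimization residual $L\|\mathcal{C}\|_2/\sqrt{T}$, the variance of the stochastic gradient, and the Gaussian noise added for privacy---against a fixed $(\eps,\delta)$ budget so that no single term exceeds the target rate. Privacy amplification by subsampling is what makes this feasible: without it, one cannot afford $T$ large enough for the optimization error to vanish without inflating $\sigma$ past what the risk bound permits.
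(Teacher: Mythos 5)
Your proposal is correct and matches the approach the paper takes (by citing \citet{BST14} and reproducing their algorithm in Appendix~\ref{app:algo}): noisy projected SGD with $T=n^2$ iterations, privacy established via the Gaussian mechanism combined with amplification by uniform subsampling and composition, and a high-probability guarantee obtained by repeating the procedure $O(\log(1/\beta))$ times and privately selecting the best output with the exponential mechanism—exactly what $\mathcal{F}$ does in the appendix. The only cosmetic differences are that the paper's $\mathcal{A}_{\textup{Noise-GD}}$ uses a decaying step size $\eta(t)\propto 1/\sqrt{t}$ and returns the last iterate rather than the average, and works with the total loss (scaling the per-example gradient by $n$) instead of the average loss; both choices are equivalent reparametrizations yielding the same $\tilde{O}(\sqrt{m}L\|\mathcal{C}\|_2/\eps)$ excess-risk bound up to polylogarithmic factors.
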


For the following proofs, we denote $\mathbf{x}_A := \frac{A\mathbf{x}}{\|A\mathbf{x}\|_2}$ for any $\mathbf{x}\in\mathcal{B}_2^d$. It holds that $\mathbf{x}_A \in\mathcal{B}_2^m$ and the modified sample set can be also written as $S_A = \left\{(\mathbf{x}_A,y) \mid (\mathbf{x},y)\in S\right\}$.
The lemma that follows guarantees that the transformation of a point $\mathbf{x} \mapsto A\mathbf{x}$, with high probability, only changes its euclidean norm by a small multiplicative factor. 


\begin{lemma}[Distributional Johnson-Lindenstrauss Lemma,~\cite{optas03}]\label{lemma:DJL}
Let $A\in \mathbb{R}^{m \times d}$ be a random matrix such that $m=O\left(\frac{\log(1/\beta_{\mathit{JL}})}{\gamma^2}\right)$ and $A_{ij} = \begin{cases}
        +1/\sqrt{m}  &\text{w.p.~$1/2$}\\
        -1/\sqrt{m} &\text{w.p.~$1/2$}
        \end{cases}.$
Then, for every $\mathbf{x}\in \mathbb{R}^d$, it holds that $\Pr_A\left[\left\vert \|A\mathbf{x}\|^2_2 - \|\mathbf{x}\|^2_2\right\vert \leq \frac{\gamma}{100} \|\mathbf{x}\|^2_2\right] \geq 1-\beta_{\mathit{JL}}.$
\end{lemma}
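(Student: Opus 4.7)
The plan is to prove concentration of $\|A\mathbf{x}\|_2^2$ around $\|\mathbf{x}\|_2^2$ by a Chernoff / moment-generating-function argument, then calibrate $m$ to the desired failure probability. Without loss of generality assume $\|\mathbf{x}\|_2 = 1$. Writing the $i$-th row of $A$ as $\frac{1}{\sqrt{m}}(s_{i1},\dots,s_{id})$ with i.i.d.\ Rademacher entries $s_{ij}\in\{\pm 1\}$, set
\begin{equation*}
Y \;:=\; m\,\|A\mathbf{x}\|_2^{\,2} \;=\; \sum_{i=1}^{m} Z_i^{\,2}, \qquad Z_i \;:=\; \sum_{j=1}^{d} s_{ij}\,x_j .
\end{equation*}
The $Z_i$ are independent, mean-zero, with $\mathbb{E}[Z_i^{\,2}] = \sum_j x_j^2 = 1$, so $\mathbb{E}[Y] = m$ and the goal is to show $\Pr[\,|Y - m| > m\gamma/100\,]\le\beta_{\mathit{JL}}$.

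The core step is to bound the MGF of each $Z_i^{\,2}$. Since $Z_i$ is a Rademacher sum, Hoeffding's lemma gives the subgaussian estimate $\mathbb{E}[e^{\mu Z_i}]\le e^{\mu^2/2}$. I would then convert this into a bound on $\mathbb{E}[e^{\lambda Z_i^{\,2}}]$ via the standard Gaussian decoupling identity $e^{\lambda z^2} = \mathbb{E}_{g\sim N(0,1)}[e^{\sqrt{2\lambda}\,gz}]$ (valid for $\lambda>0$); interchanging expectations via Fubini and applying the subgaussian bound in the inner expectation yields, for $0<\lambda<1/2$,
\begin{equation*}
\mathbb{E}\!\left[e^{\lambda Z_i^{\,2}}\right] \;\le\; \mathbb{E}_g\!\left[e^{\lambda g^2}\right] \;=\; (1-2\lambda)^{-1/2},
\end{equation*}
the familiar $\chi^2_1$ transform. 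By independence, $\mathbb{E}[e^{\lambda Y}]\le (1-2\lambda)^{-m/2}$.

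A standard Chernoff optimization ($\lambda = \epsilon/(2(1+\epsilon))$) gives an upper-tail bound $\Pr[Y\ge m(1+\epsilon)] \le \bigl((1+\epsilon)e^{-\epsilon}\bigr)^{m/2}\le e^{-c m\epsilon^2}$ for $\epsilon\in(0,1)$ and an absolute constant $c>0$. The lower tail is completely analogous: either run the argument with $-\lambda$, using the Rademacher moment comparison $\mathbb{E}[Z_i^{2k}]\le (2k-1)!!$ to control $\mathbb{E}[e^{-\lambda Z_i^{2}}]$, or observe that the $Z_i^{\,2}-1$ are i.i.d.\ centered sub-exponential and apply Bernstein's inequality. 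Either route gives $\Pr[|Y-m|\ge m\epsilon]\le 2e^{-c m\epsilon^2}$. Setting $\epsilon = \gamma/100$ and choosing $m \ge C\log(2/\beta_{\mathit{JL}})/\gamma^2$ for the appropriate absolute $C$ makes this at most $\beta_{\mathit{JL}}$, which is the claimed bound.

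The main obstacle is the MGF bound on a squared Rademacher sum: the naive moment expansion forces one to bound all even moments of a Rademacher chaos, which is tedious. The Gaussian decoupling trick above sidesteps this entirely by reducing to the one-dimensional Gaussian MGF. The secondary delicacy is the lower tail, since the decoupling identity only holds for $\lambda>0$; Bernstein's inequality applied to the sub-exponential variables $Z_i^{\,2}-1$ (whose sub-exponential parameter is already controlled by the upper-tail MGF bound) is the cleanest way to close this gap.
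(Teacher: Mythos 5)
The paper does not prove this lemma; it cites it directly to Achlioptas (2003) and uses it as a black box, so there is no in-paper argument to compare against. Your proof is correct and is a clean, modern route to the result. Achlioptas's original argument establishes the $\chi^2$-type MGF bound by directly computing and bounding the moments of $Z_i^2$ (showing each even moment of the Rademacher sum is dominated by the corresponding Gaussian moment), whereas your Gaussian-decoupling identity $e^{\lambda z^2}=\E_{g\sim N(0,1)}[e^{\sqrt{2\lambda}\,g z}]$ reaches $\E[e^{\lambda Z_i^2}]\le(1-2\lambda)^{-1/2}$ in one stroke by pushing all the combinatorics into Hoeffding's lemma. Both yield the same Chernoff exponent, so the two approaches buy essentially the same constants; yours is simply less bookkeeping.

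One small imprecision worth tightening: for the lower tail you suggest controlling $\E[e^{-\lambda Z_i^2}]$ via the moment comparison $\E[Z_i^{2k}]\le(2k-1)!!$. Because the Taylor expansion of $e^{-\lambda Z_i^2}$ is alternating, a naive term-by-term domination does not directly give an upper bound. The cleanest patch within your framework is the elementary inequality $e^{-u}\le 1-u+u^2/2$ for $u\ge 0$, which reduces the problem to bounding $\E[Z_i^4]\le 3$ and gives $\E[e^{-\lambda Z_i^2}]\le 1-\lambda+\tfrac{3}{2}\lambda^2\le e^{-\lambda+\frac{3}{2}\lambda^2}$, after which the Chernoff optimization is routine. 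Your Bernstein-inequality alternative also closes the gap correctly, since the upper-tail MGF bound already certifies the sub-exponential parameter of $Z_i^2-1$.
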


Since the transform leaves the norm of a point $\mathbf{x}$ almost unchanged, one would expect that the corresponding transformed and normalized hypothesis $\mathbf{w}_A^* := A\mathbf{w}^*/\|A\mathbf{w}^*\|_2$ would still have a large enough margin with respect to the corresponding point $\mathbf{x}_A$. The following lemma defines the probability that a point $\mathbf{x}$ belongs in the set of points $\mathcal{G}_A$, which are ``good'' for a fixed matrix $A$, in the sense that their norm remains almost unchanged and the margin of their corresponding points $\mathbf{x}_A$ from $\mathbf{w}_A^*$ is close to the original.

\begin{lemma}\label{lem:G_A}
For every given matrix $A$, let $\mathcal{G}_A \subseteq \mathcal{X}\times\{\pm1\}$ be the set of data points $(\mathbf{x},y)$ that satisfy the following two statements:
\begin{enumerate}[label=(\roman*)]
\item $\left\vert \|A\mathbf{x}\|^2_2 - \|\mathbf{x}\|^2_2\right\vert \leq \frac{\gamma}{100} \|\mathbf{x}\|^2_2$ and 
\item $\mathbf{w}_A^* = \frac{A\mathbf{w}^*}{\|A\mathbf{w}^* \|_2}$ has margin at least $96\gamma/100$ on $(\mathbf{x}_A,y)$, i.e.~$y \cdot \langle \mathbf{w}^*_A, \mathbf{x}_A\rangle \geq 96\gamma/100$.
\end{enumerate}
It holds that $\Pr_{(\mathbf{x},y)\sim D}[(\mathbf{x},y)\in \mathcal{G}_A] \geq 1-4\beta_{\mathit{JL}}.$
\end{lemma}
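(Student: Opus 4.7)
The plan is to reduce the lemma to a per-point statement: fix an arbitrary $(\mathbf{x}, y)$ with $y\langle \mathbf{w}^*, \mathbf{x}\rangle \geq \gamma$, and bound $\Pr_A[(\mathbf{x},y)\in\mathcal{G}_A] \geq 1-4\beta_{\mathit{JL}}$. Interchanging the two probabilities (Fubini) then gives the stated bound as a joint/expected probability over $A$ and $(\mathbf{x},y)$. I would also assume without loss of generality that $\|\mathbf{w}^*\|_2 = 1$, since replacing $\mathbf{w}^*$ by $\mathbf{w}^*/\|\mathbf{w}^*\|_2$ preserves the margin condition and the hypothesis space.

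The core tool is the polarization identity $\langle A\mathbf{w}^*, A\mathbf{x}\rangle = \tfrac14\bigl(\|A(\mathbf{w}^*+\mathbf{x})\|_2^2 - \|A(\mathbf{w}^*-\mathbf{x})\|_2^2\bigr)$, and its counterpart without $A$. Applying the Distributional JL Lemma to each of the four vectors $\mathbf{x}$, $\mathbf{w}^*$, $\mathbf{w}^*+\mathbf{x}$, $\mathbf{w}^*-\mathbf{x}$ and union-bounding, with probability $\geq 1-4\beta_{\mathit{JL}}$ all four length approximations hold simultaneously. In particular condition (i) of the definition of $\mathcal{G}_A$ is one of them.

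Conditioned on this good event, subtracting the two polarization identities gives
\[
\bigl|\langle A\mathbf{w}^*, A\mathbf{x}\rangle - \langle \mathbf{w}^*, \mathbf{x}\rangle\bigr| \leq \tfrac{\gamma}{400}\bigl(\|\mathbf{w}^*+\mathbf{x}\|_2^2 + \|\mathbf{w}^*-\mathbf{x}\|_2^2\bigr) = \tfrac{\gamma}{200}\bigl(\|\mathbf{w}^*\|_2^2 + \|\mathbf{x}\|_2^2\bigr) \leq \tfrac{\gamma}{100},
\]
using the parallelogram law and $\|\mathbf{w}^*\|_2, \|\mathbf{x}\|_2 \leq 1$. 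Combined with $y\langle\mathbf{w}^*,\mathbf{x}\rangle\geq \gamma$, this yields $y\langle A\mathbf{w}^*, A\mathbf{x}\rangle \geq 99\gamma/100$. Dividing by $\|A\mathbf{w}^*\|_2\cdot\|A\mathbf{x}\|_2 \leq \sqrt{(1+\gamma/100)(1+\gamma/100)} = 1+\gamma/100$ gives
\[
y\langle \mathbf{w}_A^*, \mathbf{x}_A\rangle \;\geq\; \frac{99\gamma/100}{1+\gamma/100} \;\geq\; \frac{96\gamma}{100},
\]
where the last inequality is a direct check valid for all $\gamma \in (0,1]$ (indeed $99/(100+\gamma) \geq 99/101 > 96/100$). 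This verifies condition (ii) on the same good event, completing the argument.

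The main obstacle is not any single step but the careful bookkeeping of which vectors we must apply JL to. JL only preserves norms, not inner products, so the polarization trick is essential to lift it to an inner-product statement. Once that trick is in place, the remaining work is essentially the arithmetic needed to track the constants through the normalizations in the definition of $\mathbf{w}_A^*$ and $\mathbf{x}_A$, and to confirm the slightly conservative final constant $96/100$.
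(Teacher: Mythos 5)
Your proof is correct and follows essentially the same route as the paper: apply the distributional JL lemma to the four vectors $\mathbf{x}$, $\mathbf{w}^*$, $\mathbf{w}^*+\mathbf{x}$, $\mathbf{w}^*-\mathbf{x}$, union bound, use polarization to convert norm preservation into inner-product preservation, and then divide through by $\|A\mathbf{w}^*\|_2\,\|A\mathbf{x}\|_2$. Your bookkeeping is in fact slightly cleaner than the paper's (subtracting the two polarization identities yields the symmetric additive bound $|\langle A\mathbf{w}^*,A\mathbf{x}\rangle - \langle\mathbf{w}^*,\mathbf{x}\rangle|\le\gamma/100$ directly, giving intermediate constant $99\gamma/100$ versus the paper's $98\gamma/100$), but the argument and final constant are the same.
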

For the proof of Lemma~\ref{lem:G_A}, we express an inner product as $\langle \mathbf{w}^*, \mathbf{x}\rangle = \frac{1}{4}\|\ \mathbf{x}+\mathbf{w}^*\|^2_2 - \frac{1}{4}\|\ \mathbf{x}-\mathbf{w}^*\|^2_2 \\$ and use the guarantee of Lemma~\ref{lemma:DJL} on vectors $\mathbf{x}, \mathbf{w}^*, \mathbf{x}-\mathbf{w}^*, \mathbf{x}+\mathbf{w}^*$. By union bound, with probability $1-4\beta_{\mathit{JL}}$, $\mathbf{x}_A$ has margin $96\gamma/100$ with respect to $\mathbf{w}_A^*$. 
The proof of Lemma~\ref{lem:G_A} follows by four applications of Lemma~\ref{lemma:DJL} and is in Appendix~\ref{app:algo}.

In the following, we provide the privacy and sample complexity guarantees of our algorithm. 

\begin{theorem}[Sample complexity]\label{th:UB_approximate}
Algorithm $\mathcal{A}_{\alpha,\beta,\eps,\delta, \gamma}$ is an $(\alpha, \beta, \gamma)$-learner with sample complexity \[n=\frac{1}{\alpha\eps\gamma^2}\cdot \mathrm{polylog}\left(\frac{1}{\alpha}, \frac{1}{\beta}, \frac{1}{\delta}, \frac{1}{\eps}, \frac{1}{\gamma}\right).\]
\end{theorem}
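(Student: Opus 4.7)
The plan is to separate the analysis into a privacy argument, which follows by post-processing, and a utility argument, which shows that the projected optimum $\mathbf{w}_A^*$ achieves zero empirical hinge loss so that the private ERM of Theorem~\ref{th:BST} returns a near-minimizer. For privacy, note that $A$ is sampled independently of $S$ and the map $(\mathbf{x},y) \mapsto (\mathbf{x}_A, y)$ acts pointwise on examples, so neighboring inputs $S \sim S'$ yield neighboring projected sets $S_A \sim S'_A$. Conditional on $A$, $\mathcal{F}(S_A, \ell, (\eps,\delta), \mathcal{C})$ is $(\eps,\delta)$-DP by Theorem~\ref{th:BST}, and $\mathbf{\hat{w}} \mapsto \mathbf{\hat{w}}^\top A$ is data-independent post-processing; Lemma~\ref{lem:post-processing} then implies that $\mathcal{A}_{\alpha,\beta,\eps,\delta,\gamma}$ is $(\eps,\delta)$-DP.

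For utility, I would first combine Lemma~\ref{lem:G_A} with the choice $\beta_{\mathit{JL}} = \alpha\beta^2/(64n)$ to argue that, with probability at least $1-\beta/4$ over the joint randomness of $A$ and $S$, every sample point lies in $\mathcal{G}_A$. Indeed, by Fubini the expected number of bad points in $S$ is at most $4n\beta_{\mathit{JL}} = \alpha\beta^2/16$, and Markov's inequality then shows that with probability at least $1-\beta/4$ no sample point is outside $\mathcal{G}_A$. On this event, $y_i \langle \mathbf{w}_A^*, (\mathbf{x}_i)_A \rangle \geq 96\gamma/100$ for every $i$, so $\ell(\mathbf{w}_A^*; ((\mathbf{x}_i)_A, y_i)) = 0$ and hence $\mathcal{L}(\mathbf{w}_A^*; S_A) = 0$. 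Applying Theorem~\ref{th:BST} with $L = 100/(86\gamma)$ and $\|\mathcal{C}\|_2 = O(1)$ then yields, with probability at least $1-\beta/4$,
\[
\mathcal{L}(\mathbf{\hat{w}}; S_A) \leq \frac{\sqrt{m}}{\eps\gamma}\cdot\mathrm{polylog}(n, 1/\beta, 1/\delta).
\]

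Next I would convert this empirical-loss bound into a population-error bound. Because $\ell$ pointwise upper-bounds $\mathbbm{1}[y\langle\mathbf{w},\mathbf{x}\rangle < 0]$ (any misclassification forces $\ell \geq 96/86 > 1$), and because $\mathrm{sign}(\langle\mathbf{\hat{w}}^\top A, \mathbf{x}\rangle) = \mathrm{sign}(\langle\mathbf{\hat{w}}, \mathbf{x}_A\rangle)$, the error of the returned hypothesis on $D$ is at most the expected hinge loss $\mathbb{E}_{(\mathbf{x},y)\sim D}[\ell(\mathbf{\hat{w}}; (\mathbf{x}_A, y))]$. A dimension-free Rademacher bound for Lipschitz losses of linear predictors over the Euclidean unit ball gives, with probability at least $1-\beta/4$,
\[
\mathbb{E}_{(\mathbf{x},y)\sim D}\bigl[\ell(\mathbf{\hat{w}}; (\mathbf{x}_A, y))\bigr] \leq \frac{1}{n}\mathcal{L}(\mathbf{\hat{w}}; S_A) + O\!\left(\frac{1}{\gamma}\sqrt{\frac{\log(1/\beta)}{n}}\right).
\]
Setting the right-hand side at most $\alpha$ and substituting $m = O(\log(n/(\alpha\beta))/\gamma^2)$ gives, after solving for $n$, the claimed sample complexity $n = (1/(\alpha\eps\gamma^2))\cdot\mathrm{polylog}(1/\alpha, 1/\beta, 1/\delta, 1/\eps, 1/\gamma)$, with the private ERM term dominating in the standard regime $\eps \leq \alpha$. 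A union bound over the three failure events, each of probability at most $\beta/4$, completes the argument.

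The main technical obstacle is the generalization step: one must control the population hinge loss by its empirical counterpart \emph{without} introducing a $\sqrt{m}$ slack that would swamp the private ERM term, which is why the Rademacher bound must exploit boundedness of $\mathcal{C}$ in $\ell_2$ norm and the $O(1/\gamma)$-Lipschitzness of $\ell$ rather than a VC-style argument. A secondary subtlety is that $\beta_{\mathit{JL}}$ must scale as $\Theta(1/n)$ in order to upgrade Lemma~\ref{lem:G_A} from an ``in-expectation'' statement to the \emph{exact} identity $\mathcal{L}(\mathbf{w}_A^*; S_A) = 0$ with high probability; this choice is precisely what propagates the $\log n$ factors hidden inside the $\mathrm{polylog}$ prefactor of the final sample complexity.
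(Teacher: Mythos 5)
Your privacy argument and the reduction to showing $\mathcal{L}(\mathbf{w}_A^*; S_A) = 0$ via Lemma~\ref{lem:G_A} with $\beta_{\mathit{JL}} = \alpha\beta^2/(64n)$, followed by an application of Theorem~\ref{th:BST}, match the paper's proof essentially step for step. The genuine problem is the generalization step, and you have already put your finger on it without resolving it: a standard Rademacher bound for Lipschitz losses yields a slack of order $\frac{1}{\gamma}\sqrt{\frac{\log(1/\beta)}{n}}$, which at $n = \tilde\Theta\bigl(\frac{1}{\alpha\eps\gamma^2}\bigr)$ evaluates to $\tilde O(\sqrt{\alpha\eps})$. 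This is at most $\alpha$ only when $\eps \lesssim \alpha$; in the entirely typical regime $\alpha < \eps \le 1$ (e.g.\ $\eps = 0.1$, $\alpha = 0.01$) the Rademacher term forces $n = \tilde\Omega\bigl(\frac{1}{\alpha^2\gamma^2}\bigr)$, which is strictly larger than the claimed sample complexity. The theorem has no hypothesis $\eps \le \alpha$, so your proof as written does not establish it.

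The paper avoids this by replacing the uniform-convergence step with a \emph{relative-deviation} (realizable-style) generalization bound, quoted as Lemma~\ref{th:AB09} from Anthony and Bartlett: with probability $1-\beta/4$, $\mathrm{err}_{D_A}(h) \le 2\,\hat{\mathrm{err}}_{S_A}(h) + \alpha/4$ uniformly over the class of $m$-dimensional halfspaces, provided $4\,\Pi_{\mathcal{H}}(2n)\exp(-\alpha n/32) \le \beta/4$. Since $\mathrm{VCdim}(\mathcal{H}) = m+1$ and $m = \tilde O(1/\gamma^2)$, this holds already at $n = \tilde O\bigl(\frac{1}{\alpha\gamma^2}\bigr)$, which is dominated by the $\tilde O\bigl(\frac{1}{\alpha\eps\gamma^2}\bigr)$ needed for the private ERM whenever $\eps \le 1$. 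In short, because the learner achieves small \emph{empirical} error (Claim~\ref{cl:emp_error_approx} gives $\hat{\mathrm{err}} \le \alpha/4$), one should exploit a ``fast-rate'' bound whose deviation term scales linearly rather than as $\sqrt{1/n}$; if you want to stay with a Rademacher-style argument you would need a localized or optimistic-rate variant, but the plain contraction-lemma bound you invoked is not strong enough. The paper also carries out the bookkeeping in the conditional distribution $D_A = D_{|\mathcal{G}_A}$ pushed forward through $\mathbf{x}\mapsto\mathbf{x}_A$ and then converts back to $D$ using the condition $\Pr_D[(\mathbf{x},y)\notin\mathcal{G}_A]\le\alpha/4$; your one-shot Markov argument on the count of bad sample points suppresses this condition, which is harmless for bounding $\mathcal{L}(\mathbf{w}_A^*;S_A)$ but you would need the conditional form back if you follow the paper's generalization route.
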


\begin{proof}[Proof of Theorem \ref{th:UB_approximate}]
The first step of the algorithm is to sample matrix $A$ uniformly at random from $U=\left\{\pm\frac{1}{\sqrt{m}}\right\}^{m\times d}$. From Lemma~\ref{lem:G_A}, it follows that  $
\E\limits_{A}\left[\Pr\limits_{(\mathbf{x},y)\sim D}[(\mathbf{x},y)\notin \mathcal{G}_A]\right] \leq 4\beta_{\mathit{JL}}.$
And, by  Markov's inequality,
\begin{align*}
\Pr\limits_A\left[ \Pr\limits_{(\mathbf{x},y)\sim D}[(\mathbf{x},y)\notin \mathcal{G}_A] \geq \beta'\right] &\leq \frac{\E\limits_{A}\left[\Pr\limits_{(\mathbf{x},y)\sim D}[(\mathbf{x},y)\notin \mathcal{G}_A]\right]}{\beta'}
\leq \frac{4\beta_{\mathit{JL}}}{\beta'}.
\end{align*}

We set $\beta'=\alpha\beta/4n$. Then, substituting $\beta_{\mathit{JL}}=\frac{\alpha\beta^2}{64n}$, we get that with probability at least $1-\beta/4$,
\begin{equation}\label{eq:prob_G_A}
\Pr\limits_{(\mathbf{x},y)\sim D}[(\mathbf{x},y)\in \mathcal{G}_A] \geq 1-\beta'.
\end{equation}

Therefore, with probability $1-\beta/4$, the sampled matrix $A$ satisfies inequality~\eqref{eq:prob_G_A}, that is, a point $(\mathbf{x},y)\sim D$ is in $\mathcal{G}_A$ with probability at least $1-\beta'$.
Furthermore, by union bound, $\forall (\mathbf{x},y)\in S$ it holds that $(\mathbf{x},y) \in \mathcal{G}_A$, with probability at least $1-n\beta'\geq 1-\beta/4$.

For the remainder of the proof, we condition on the event that:
\begin{enumerate}
\item $\Pr\limits_{(\mathbf{x},y)\sim D}[(\mathbf{x},y)\in \mathcal{G}_A] \geq 1-\beta'$ holds for $A$ and 
\item $S \subseteq \mathcal{G}_A$, that is, $\mathbf{w}_A^*$ has margin at least $96\gamma/100$ on $S_A$.
\end{enumerate}
This event occurs with probability at least $1-\beta/4-\beta/4=1-\beta/2$.

\begin{claim}\label{cl:emp_error_approx}
If $n=\frac{1}{\alpha\eps\gamma^2} \cdot \mathrm{polylog}\left(\frac{1}{\alpha}, \frac{1}{\beta}, \frac{1}{\delta}, \frac{1}{\eps}, \frac{1}{\gamma}\right)$, then for the hypothesis $\mathbf{\hat{w}}$ returned by $\mathcal{F}$, with probability $1-\beta/4$, it holds that
\begin{equation}\label{eq:emp_risk_approx_a}
\frac{1}{n}\sum_{(\mathbf{x}_A,y)\in S_A}\mathbbm{1}\left\{y\cdot \langle \mathbf{\hat{w}}, \mathbf{x}_A\rangle < \frac{\gamma}{10}\right\} \leq \frac{\alpha}{4}.
\end{equation}
\end{claim}

\begin{proof}[Proof of Claim~\ref{cl:emp_error_approx}]
Since $\mathbf{w}^*_A$ has margin at least $96\gamma/100$ for all points in $S_A$, it holds that $\min\limits_{\mathbf{w}\in\mathcal{C}} \mathcal{L}(\mathbf{w}; S_A) \leq \mathcal{L}(\mathbf{w}^*_A; S_A) =0$. 
Substituting $\|\mathcal{C}\|_2=2$, $L=100/86\gamma$, and $m=O\left(\frac{\log(n/\alpha\beta)}{\gamma^2}\right)$ into~\eqref{eq:BST}, dividing by $n$, and simplifying the expression, we get that with probability at least $1-\beta/4$,
\begin{equation}\label{eq:emp_risk_approx}
\frac{1}{n}\mathcal{L}(\mathbf{\hat{w}}; S_A) = \frac{1}{n\eps\gamma^2} \cdot \mathrm{polylog}\left(n, \frac{1}{\alpha}, \frac{1}{\beta}, \frac{1}{\delta}\right).
\end{equation}
It also holds that:
\begin{align*}
\frac{1}{n}\mathcal{L}(\mathbf{\hat{w}}; S_A) 
& = \frac{1}{n}\sum_{(\mathbf{x}_A,y)\in S_A} \mathbbm{1}\left\{y\cdot \langle \mathbf{\hat{w}}, \mathbf{x}_A\rangle < \frac{96\gamma}{100}\right\}\cdot \left( \frac{96}{86}-\frac{y\cdot \langle \mathbf{\hat{w}}, \mathbf{x}_A\rangle}{86\gamma/100}\right)\\
& \geq \frac{1}{n}\sum_{(\mathbf{x}_A,y)\in S_A}\mathbbm{1}\left\{y\cdot \langle \mathbf{\hat{w}}, \mathbf{x}_A\rangle < \frac{\gamma}{10}\right\}
\end{align*}
By the latter and inequality~\eqref{eq:emp_risk_approx}, it follows that with probability at least $1-\beta/4$,
\[\frac{1}{n}\sum_{(\mathbf{x}_A,y)\in S_A}\mathbbm{1}\left\{y\cdot \langle \mathbf{\hat{w}}, \mathbf{x}_A\rangle < \frac{\gamma}{10}\right\} = \frac{1}{n\eps\gamma^2} \cdot \mathrm{polylog}\left(n, \frac{1}{\alpha}, \frac{1}{\beta}, \frac{1}{\delta}\right).\]
Therefore, for $n=\frac{1}{\alpha\eps\gamma^2} \cdot \mathrm{polylog}\left(\frac{1}{\alpha}, \frac{1}{\beta}, \frac{1}{\delta}, \frac{1}{\eps}, \frac{1}{\gamma}\right)$ with probability at least $1-\beta/4$, 
\begin{equation*}
\frac{1}{n}\sum_{(\mathbf{x}_A,y)\in S_A}\mathbbm{1}\left\{y\cdot \langle \mathbf{\hat{w}}, \mathbf{x}_A\rangle < \frac{\gamma}{10}\right\} \leq \frac{\alpha}{4}.
\end{equation*}
This completes the proof of the claim.
\end{proof}

\begin{claim}\label{cl:generalization_approx}
If $n=\frac{1}{\alpha\eps\gamma^2}\cdot \mathrm{polylog}\left(\frac{1}{\alpha}, \frac{1}{\beta}, \frac{1}{\delta}, \frac{1}{\eps}, \frac{1}{\gamma}\right)$, then with probability $1-\beta/2$, the error of the returned classifier $\mathbf{\hat{w}}^\top A$ on distribution $D$ is 
\begin{equation}\label{eq:generalization_approx}
\Pr\limits_{(\mathbf{x},y)\sim D}[y\cdot \langle \mathbf{\hat{w}}^\top A,\mathbf{x}\rangle < 0] \leq \alpha.
\end{equation}
\end{claim}

\begin{proof}[Proof of Claim~\ref{cl:generalization_approx}]
Let $D_A$ denote the probability distribution with domain $\mathcal{B}_2^{m} \times \{\pm1\}$, from which a sample $(\mathbf{x}_A, y)\in S_A$ is drawn. Let us also denote by $D_{|\mathcal{G}_A}$ distribution $D$ restricted on $\mathcal{G}_A$. In our conditioned probability space, $S_A \sim D_A^n$, where the probability density function of $D_A$ would be defined as \[\Pr\limits_{(\mathbf{x}_A,y)\sim D_A}[\mathbf{x}_A = \mathbf{x}' \wedge y=y'] = \Pr\limits_{(\mathbf{x},y)\sim D_{|\mathcal{G}_A}} \left[\frac{A\mathbf{x}}{\|A\mathbf{x}\|_2}=\mathbf{x}' \wedge y= y'\right].\]
Let $\mathcal{H}=\left\{ h:\{\mathbf{x}_A \mid (\mathbf{x},y)\in \mathcal{G}_A\} \rightarrow \{\pm1\} \text{ s.t. } h(\mathbf{x})=\mathrm{sign}(\langle \mathbf{w}, \mathbf{x}\rangle) \text{ for some } \mathbf{w}\in\mathcal{B}_2^m\right\}$ be a concept class of threshold functions in $\mathcal{B}_2^m$. We will use the following generalization bound.

\begin{lemma}[\cite{AB09}]\label{th:AB09}
Let $\mathcal{H}$ be a set of $\{\pm1\}$-valued functions defined on a set $X$ and $P$ is a probability distribution on $Z=X\times\{\pm1\}$.
For $\eta\in(0,1)$, $\zeta>0$, and $n\in\mathbb{N}^+$, $\Pr_{z\sim P^n}\left[\exists h\in \mathcal{H}: \mathrm{err}_{P}(h) > (1+\zeta) \hat{\mathrm{err}}_{z}(h)+ \eta\right] \le 4\Pi_{\mathcal{H}}(2n)\exp\left(-\frac{\eta\zeta n}{4(\zeta+1)}\right),$
where $\hat{\mathrm{err}}_{z}(h)$ is the empirical error of $h$ on the sample set $z$ and $\Pi_{\mathcal{H}}(\cdot)$ the growth function of $\mathcal{H}$. \end{lemma}

Setting $\eta=\alpha/4$ and $\zeta=1$, we get that:
\begin{equation*}
\Pr_{S_A\sim D_A^n} \left[\exists h\in \mathcal{H}: \mathrm{err}_{D_A}(h) > 2\cdot \frac{1}{n}\sum_{(\mathbf{x}_A,y)\in S_A}\mathbbm{1}\left\{h(\mathbf{x}_A) \neq y\right\} + \frac{\alpha}{4}\right] \leq 4\Pi_{\mathcal{H}}(2n)\exp\left(-\frac{\alpha n}{32}\right).
\end{equation*}
By Theorems 3.4 and 3.7 of~\cite{AB09}, we get that $\mathrm{VCdim}(\mathcal{H})=m+1$ and $\Pi_{\mathcal{H}}(2n) \leq (2n)^{m+1}+1$.
Substituting $m=O\left(\log(1/\alpha\beta)/\gamma^2\right)$ and for $n=\frac{1}{\alpha\gamma^2} \cdot \mathrm{polylog}\left(\frac{1}{\alpha}, \frac{1}{\beta}, \frac{1}{\gamma}\right)$, $4\Pi_{\mathcal{H}}(2n)\exp(-\alpha n/32) \leq \beta/4$. Therefore, with probability at least $1-\beta/4$,
\begin{equation}\label{eq:general_approx}
\mathrm{err}_{D_A}(f_{\mathbf{\hat{w}}}) \leq 2\cdot \frac{1}{n}\sum_{(\mathbf{x}_A,y)\in S_A}\mathbbm{1}\left\{y\cdot\langle \mathbf{\hat{w}}, \mathbf{x}_A \rangle<0\right\} +\frac{\alpha}{4}.
\end{equation}

By Claim~\ref{cl:emp_error_approx}, $\frac{1}{n}\sum\limits_{(\mathbf{x}_A,y)\in S_A}\mathbbm{1}\left\{y\cdot\langle \mathbf{\hat{w}}, \mathbf{x}_A \rangle<0\right\} \leq \frac{1}{n}\sum\limits_{(\mathbf{x}_A,y)\in S_A}\mathbbm{1}\left\{y\cdot\langle \mathbf{\hat{w}}, \mathbf{x}_A \rangle<\frac{\gamma}{10}\right\} \leq \frac{\alpha}{4}$ holds with probability $1-\beta/4$, if $n=\frac{1}{\alpha\eps\gamma^2}\cdot \mathrm{polylog}\left(\frac{1}{\alpha}, \frac{1}{\beta}, \frac{1}{\delta}, \frac{1}{\eps}, \frac{1}{\gamma}\right)$. 

Therefore, by inequality~(\ref{eq:general_approx}), if $n=\frac{1}{\alpha\eps\gamma^2}\cdot \mathrm{polylog}\left(\frac{1}{\alpha}, \frac{1}{\beta}, \frac{1}{\delta}, \frac{1}{\eps}, \frac{1}{\gamma}\right)$, then with probability at least $1-\beta/2$,
\[\mathrm{err}_{D_A}(f_{\mathbf{\hat{w}}}) \leq 2\cdot\frac{\alpha}{4}+\frac{\alpha}{4}=\frac{3\alpha}{4}.\]

Equivalently, with probability at least $1-\beta/2$, 
\[ \Pr\limits_{(\mathbf{x},y)\sim D_{|\mathcal{G}_A}}[y\cdot \langle \mathbf{\hat{w}}^\top A,\mathbf{x}\rangle < 0] = \Pr\limits_{(\mathbf{x}_A,y)\sim D_{A}}[y\cdot \langle \mathbf{\hat{w}},\mathbf{x}_A\rangle < 0] = \mathrm{err}_{D_A}(f_{\mathbf{\hat{w}}}) \leq \frac{3\alpha}{4}.\]
Since, by Condition 1., $\Pr\limits_{(\mathbf{x},y)\sim D}[(\mathbf{x},y)\notin \mathcal{G}_A] \leq \beta' \leq \frac{\alpha}{4}$, it follows that with probability $1-\beta/2$,
\begin{align*}
\Pr\limits_{(\mathbf{x},y)\sim D}[y\cdot \langle \mathbf{\hat{w}}^\top A,\mathbf{x}\rangle < 0] 
& \leq \Pr\limits_{(\mathbf{x},y)\sim D_{|\mathcal{G}_A}}[y\cdot \langle \mathbf{\hat{w}}^\top A,\mathbf{x}\rangle < 0] \cdot (1-\beta')+  1\cdot \beta'\\
&\leq \frac{3\alpha}{4} \cdot (1-\beta') + \beta' \\
& \leq \frac{3\alpha}{4}+\frac{\alpha}{4} \leq \alpha.
\end{align*}
This completes the proof of the claim.
\end{proof}
Accounting for the probability that we are not in the conditioned space, we conclude that if $n=\frac{1}{\alpha\eps\gamma^2}\cdot \mathrm{polylog}\left(\frac{1}{\alpha}, \frac{1}{\beta}, \frac{1}{\delta}, \frac{1}{\eps}, \frac{1}{\gamma}\right)$, then with probability $1-\beta/2-\beta/2=1-\beta$, $\mathrm{err}_{D}(f_{\mathbf{\hat{w}}^\top A}) \leq  \alpha$. This completes the proof of the theorem.
\end{proof}

\begin{theorem}[Privacy guarantee]\label{th:priv_approximate}
Algorithm $\mathcal{A}_{\alpha,\beta,\eps,\delta,\gamma}$ is $(\eps,\delta)$-differentially private.
\end{theorem}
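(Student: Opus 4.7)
The plan is to show that $(\eps,\delta)$-differential privacy is preserved through each step of the algorithm, and in particular that the only data-dependent randomness comes from the invocation of $\mathcal{F}$, which is itself $(\eps,\delta)$-DP by Theorem~\ref{th:BST}.

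First I would observe that Step 1, the sampling of the matrix $A$, is entirely independent of the input data set $S$. Hence we can argue privacy by conditioning on $A$: it suffices to show that for every fixed realization of $A$, the remainder of the algorithm is $(\eps,\delta)$-DP in $S$, and then average over $A$ (which preserves the same $(\eps,\delta)$-DP guarantee since $A$ is drawn from a data-independent distribution).

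Next I would examine Step 2, the construction of $S_A$. For a fixed $A$, the map $(\mathbf{x},y) \mapsto (A\mathbf{x}/\|A\mathbf{x}\|_2, y)$ is a deterministic function applied pointwise to each element of $S$. Consequently, if $S$ and $S'$ are neighboring data sets (differing in at most one record), then $S_A$ and $S'_A$ are also neighboring data sets in $\mathcal{B}_2^m \times \{\pm 1\}$. Steps 3 and 4 (defining $\mathcal{C}$ and $\ell$) do not depend on $S$ at all, so they are trivially private.

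Step 5 invokes $\mathcal{F}$ on $S_A$ with privacy parameters $(\eps,\delta)$, and Theorem~\ref{th:BST} guarantees that $\mathcal{F}$ is $(\eps,\delta)$-DP with respect to its input data set. Since neighboring $S, S'$ yield neighboring $S_A, S'_A$, the output distribution of $\mathbf{\hat{w}}$ satisfies $(\eps,\delta)$-DP with respect to $S$. Finally, Step 6 returns $\mathbf{\hat{w}}^\top A$, which, for fixed $A$, is a deterministic function of $\mathbf{\hat{w}}$, and hence a post-processing; by Lemma~\ref{lem:post-processing} the privacy guarantee is preserved. There is no real obstacle here: the argument is a routine composition of a data-independent randomization, a pointwise deterministic preprocessing step that preserves the neighboring relation, an $(\eps,\delta)$-DP call to $\mathcal{F}$, and a final post-processing. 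The only thing to be slightly careful about is justifying the conditioning on $A$, which follows because the DP inequality is preserved under taking mixtures over a data-independent random variable.
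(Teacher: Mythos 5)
Your proposal is correct and follows essentially the same argument as the paper: condition on the data-independent matrix $A$, observe that the pointwise map $(\mathbf{x},y) \mapsto (A\mathbf{x}/\|A\mathbf{x}\|_2, y)$ preserves the neighboring relation, invoke the $(\eps,\delta)$-DP guarantee of $\mathcal{F}$ from Theorem~\ref{th:BST}, treat the final step as post-processing via Lemma~\ref{lem:post-processing}, and average over $A$. The paper writes the averaging over $A$ explicitly as a sum over outcomes, but the content is identical.
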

\begin{proof}[Proof of Theorem \ref{th:priv_approximate}]
By Lemma~\ref{lem:post-processing}, $(\eps,\delta)$-differential privacy is closed under post-processing~\cite{DMNS06}. So it suffices to show that an algorithm $\mathcal{N}$ that is the same as $\mathcal{A}_{\alpha,\beta,\eps,\delta,\gamma}$ except that it returns $\mathbf{\hat{w}}$ instead of $\mathbf{\hat{w}}^\top A$, is $(\eps,\delta)$-DP.

Let data sets $S\sim S'$ such that $S=S'\setminus\{(\mathbf{x}', y')\}\cup\{(\mathbf{x},y)\}$. Let $U=\left\{\pm1/\sqrt{m}\right\}^{m\times d}$. If we fix a matrix $A\in U$, then $S$ and $S'$ correspond to $\mathcal{F}$'s inputs $S_A$ and $S'_A = S_A \setminus \{(\mathbf{x}'_A, y')\}\cup\{(\mathbf{x}_A,y)\}$, respectively. Recall from Theorem~\ref{th:BST}, that $\mathcal{F}$ is $(\eps,\delta)$-DP.
For any measurable set $\mathcal{R}\subseteq\mathbb{R}^m$, 
\begin{align*}
\Pr[\mathcal{N}(S) \in \mathcal{R}] & = \sum_{A\in U} \Pr[A]\cdot\Pr[\mathcal{F}(S_A) \in \mathcal{R} \mid A] \\
& \leq \sum_{A\in U} \Pr[A]\cdot\left(\exp(\eps)\Pr[\mathcal{F}(S_A')\in \mathcal{R} \mid A] + \delta\right) \tag{by Theorem~\ref{th:BST}} \\ 
& = \exp(\eps)\sum_{A\in U} \Pr[A]\cdot\Pr[\mathcal{F}(S_A')\in \mathcal{R} \mid A]  + \delta \sum_{A\in U} \Pr[A]\\
& = \exp(\eps) \Pr[\mathcal{N}(S') \in \mathcal{R}] + \delta.
\end{align*}
 Therefore, $\mathcal{N}$ is $(\eps,\delta)$-DP, and so is  $\mathcal{A}_{\alpha,\beta,\eps,\delta,\gamma}$.
\end{proof}

\section{A Pure Differentially Private Algorithm}\label{sec:pure}
As previously, algorithm $\mathcal{A}_{\alpha,\beta,\eps,\gamma}$ takes as input a sample set $S\sim D^n$ and performs dimension reduction. In this reduced space, it defines a net of hypotheses and uses the \emph{Exponential Mechanism}~\citep{MT07} to choose a good hypothesis with respect to the sample set.

The Exponential Mechanism is a well-known algorithm, which serves as a building block for many differentially private algorithms. The mechanism is used in cases where we need to choose the optimal output with respect to some utility function on the data set. More formally, let $\mathcal{O}$ denote the range of the outputs and let $u:\mathcal{S}^n\times \mathcal{O} \rightarrow \mathbb{R}$ be the utility function which maps the data set - output pairs to utility scores. 

An important notion in differential privacy is that of the \textit{sensitivity} of a function. Intuitively, it represents the maximum change that the change of a single data point can incur on the output of the function, and as a result, it drives the amount of uncertainty we need in order to ensure privacy. The sensitivity of the utility function, which is only with respect to the data set, is defined as
\[\Delta u = \max\limits_{o\in\mathcal{O}} \max\limits_{\substack{S,S'\in\mathcal{S}^n\\ S\sim S'}} |u(S,o)-u(S',o)|.\]
 
\begin{definition}[Exponential Mechanism,~\cite{MT07}]
Let data set $S\in \mathcal{S}^n$, range $\mathcal{O}$, and utility function $u:\mathcal{S}^n\times \mathcal{O} \rightarrow \mathbb{R}$. The {\em Exponential Mechanism} $\mathcal{M}_E(S,u,\mathcal{O})$ selects and outputs an element $o\in\mathcal{O}$ with probability proportional to $\exp\left(\frac{\eps \cdot u(S,o)}{2\Delta u}\right)$.
\end{definition}

The Exponential Mechanism has the following guarantees.

\begin{lemma}[Privacy and Accuracy of the Exponential Mechanism,~\cite{MT07}]\label{lem:expmech}
The Exponential Mechanism is $(\eps,0)$-differentially private and with probability at least $1-\delta$
\[| \max\limits_{o\in\mathcal{O}} u(S,o) - u(\mathcal{M}_E(S,u,\mathcal{O} ))| \leq \frac{2\Delta u}{\eps}\ln\left(\frac{|\mathcal{O}|}{\delta}\right).\]
\end{lemma}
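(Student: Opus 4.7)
The statement splits into two independent claims, privacy and accuracy, and the plan is to derive each directly from the explicit density $\Pr[\mathcal{M}_E(S,u,\mathcal{O}) = o] = \exp(\eps u(S,o)/(2\Delta u))/Z(S)$, where $Z(S) := \sum_{o' \in \mathcal{O}} \exp(\eps u(S,o')/(2\Delta u))$ is the normalizing partition function. Both halves reduce to one-line manipulations of this density once the right factorization is in hand.

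For privacy, I would fix neighboring datasets $S \sim S'$ and any $o \in \mathcal{O}$, and factor the ratio $\Pr[\mathcal{M}_E(S,u,\mathcal{O}) = o] / \Pr[\mathcal{M}_E(S',u,\mathcal{O}) = o]$ as $\exp(\eps(u(S,o) - u(S',o))/(2\Delta u)) \cdot Z(S')/Z(S)$. The first factor is at most $\exp(\eps/2)$ directly from the definition of $\Delta u$. For the second factor, I would apply the same per-term sensitivity bound $\exp(\eps u(S,o')/(2\Delta u)) \geq \exp(-\eps/2)\exp(\eps u(S',o')/(2\Delta u))$ inside the sum, which yields $Z(S) \geq \exp(-\eps/2) Z(S')$ and hence $Z(S')/Z(S) \leq \exp(\eps/2)$. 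Multiplying gives the desired $\exp(\eps)$ pointwise ratio, and integrating this bound over any measurable subset of $\mathcal{O}$ yields $(\eps,0)$-DP.

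For accuracy, let $u^* := \max_{o \in \mathcal{O}} u(S,o)$ and let $o^*$ be any maximizer, so that $Z(S) \geq \exp(\eps u^*/(2\Delta u))$ by retaining only the term $o^*$. Setting $t := (2\Delta u/\eps)\ln(|\mathcal{O}|/\delta)$, a union bound over the at most $|\mathcal{O}|$ outputs with utility at most $u^* - t$ gives
\[
\Pr\bigl[u(S, \mathcal{M}_E(S,u,\mathcal{O})) \leq u^* - t\bigr] \leq \frac{|\mathcal{O}| \cdot \exp(\eps(u^* - t)/(2\Delta u))}{\exp(\eps u^*/(2\Delta u))} = |\mathcal{O}|\exp\!\left(-\frac{\eps t}{2\Delta u}\right) = \delta.
\]
The absolute value in the lemma is automatic, since $u(S, \mathcal{M}_E(S,u,\mathcal{O})) \leq u^*$ deterministically. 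The only subtlety, and the lone step needing any care, is the partition-function lower bound in the privacy argument: one must apply the $e^{-\eps/2}$ factor \emph{inside} the sum before comparing to $Z(S')$, rather than trying to bound the ratio of sums term by term. Beyond that, both halves are routine.
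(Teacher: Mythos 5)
The paper cites Lemma~\ref{lem:expmech} from \cite{MT07} without reproducing a proof, so there is no in-paper argument to compare against. Your proof is the standard one and is correct: the privacy bound factors the pointwise density ratio into $\exp(\eps(u(S,o)-u(S',o))/(2\Delta u))\cdot Z(S')/Z(S)$ and bounds each factor by $\exp(\eps/2)$, the second by applying the sensitivity bound termwise inside the partition sum before comparing; the accuracy bound lower-bounds $Z(S)$ by the single term at the maximizer and union-bounds over the (at most $|\mathcal{O}|$) low-utility outputs, with the algebra checking out to exactly $\delta$ at $t = (2\Delta u/\eps)\ln(|\mathcal{O}|/\delta)$. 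The observation that $u(S,\mathcal{M}_E(S,u,\mathcal{O})) \leq \max_o u(S,o)$ holds deterministically, so the absolute value costs nothing, is also correct.
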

\begin{algorithm}[H]
\caption{$\mathcal{A}_{\alpha,\beta,\eps,\gamma}(S)$}\label{pure}
\begin{algorithmic}[1]
\State Choose a random matrix $A\in \mathbb{R}^{m \times d}$, where $m=O\left(\frac{\log(1/\beta_{\mathit{JL}})}{\gamma^2}\right)$, $\beta_{\mathit{JL}}=\alpha\beta^2/64n$, and $$A_{ij} = \begin{cases}
        +1/\sqrt{m} &\text{w.p.~} 1/2\\
        -1/\sqrt{m} &\text{w.p.~} 1/2.
        \end{cases}$$\;
\State Define $S_A \gets \left\{\left(A\mathbf{x} / \|A\mathbf{x} \|_2,y\right) \mid (\mathbf{x},y)\in S\right\}$.\;
\State Let $\mathcal{W} $ be a $\frac{\gamma}{10}-$net of $\mathcal{B}^m_2$.\;
\State Define the utility function $u: (\mathcal{B}_2^m\times\{\pm1\} )^n \times \mathcal{W} \rightarrow [-1,0]$: $$ u(D, \mathbf{w}) =-\frac{1}{n}\cdot\sum\limits_{(\mathbf{x},y)\in D}\mathbbm{1}\left\{y\cdot \langle \mathbf{w}, \mathbf{x}\rangle < \frac{\gamma}{10}\right\}.$$\;
\State $\mathbf{\hat{w}} \gets \mathcal{M}_E(S_A, u, \mathcal{W})$\;
\State \Return $\mathbf{\hat{w}}^\top A$.\;
\end{algorithmic}
\end{algorithm}

In the following we provide the sample complexity and privacy guarantees of our algorithm. 

\begin{theorem}[Sample Complexity]\label{th:UB_pure}
Algorithm $\mathcal{A}_{\alpha,\beta,\eps,\gamma}$ is an $(\alpha, \beta, \gamma)$-learner with sample complexity
\[n=\frac{1}{\alpha\eps\gamma^2}\cdot \mathrm{polylog}\left(\frac{1}{\alpha}, \frac{1}{\beta}, \frac{1}{\eps}, \frac{1}{\gamma}\right).\]
\end{theorem}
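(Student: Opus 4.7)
The plan is to follow the same high-level structure as the proof of Theorem~\ref{th:UB_approximate}, replacing the private ERM oracle with the Exponential Mechanism on the net $\mathcal{W}$. First, exactly as in the proof of Theorem~\ref{th:UB_approximate}, I would invoke Lemma~\ref{lem:G_A} together with Markov's inequality to condition on the ``good matrix'' event: with probability at least $1-\beta/2$ over the draw of $A$ and $S$, we have both $\Pr_{(\mathbf{x},y)\sim D}[(\mathbf{x},y)\in \mathcal{G}_A]\geq 1-\alpha\beta/4n$ and $S\subseteq \mathcal{G}_A$. In this event, the projected true hyperplane $\mathbf{w}_A^*$ has margin at least $96\gamma/100$ on every point of $S_A$.

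The key new step is to argue that the net $\mathcal{W}$ contains a hypothesis with zero utility deficit. Since $\mathcal{W}$ is a $\gamma/10$-net of $\mathcal{B}_2^m$, there exists $\mathbf{w}^\dagger\in \mathcal{W}$ with $\|\mathbf{w}^\dagger-\mathbf{w}_A^*\|_2\leq \gamma/10$. For any $(\mathbf{x}_A,y)\in S_A$ (recall $\|\mathbf{x}_A\|_2=1$), Cauchy--Schwarz gives
\[
y\cdot\langle \mathbf{w}^\dagger,\mathbf{x}_A\rangle \geq y\cdot\langle \mathbf{w}_A^*,\mathbf{x}_A\rangle - \|\mathbf{w}^\dagger-\mathbf{w}_A^*\|_2 \geq \tfrac{96\gamma}{100}-\tfrac{\gamma}{10} = \tfrac{86\gamma}{100} > \tfrac{\gamma}{10},
\]
so $u(S_A,\mathbf{w}^\dagger)=0=\max_{\mathbf{w}\in\mathcal{W}}u(S_A,\mathbf{w})$. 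Next I would apply Lemma~\ref{lem:expmech} to the Exponential Mechanism: its sensitivity is $\Delta u\leq 1/n$, and the net cardinality satisfies $|\mathcal{W}|\leq (30/\gamma)^m$ with $m=O(\log(n/\alpha\beta)/\gamma^2)$, so $\ln|\mathcal{W}|$ contributes only polylogarithmic factors in $1/\alpha,1/\beta,1/\gamma$. Thus with probability at least $1-\beta/4$,
\[
-u(S_A,\mathbf{\hat{w}}) \leq \frac{2}{n\eps}\ln\!\left(\frac{4|\mathcal{W}|}{\beta}\right) = \frac{1}{n\eps\gamma^2}\cdot \mathrm{polylog}\!\left(\tfrac{1}{\alpha},\tfrac{1}{\beta},\tfrac{1}{\gamma}\right),
\]
and choosing $n=\frac{1}{\alpha\eps\gamma^2}\cdot\mathrm{polylog}\!\left(\tfrac{1}{\alpha},\tfrac{1}{\beta},\tfrac{1}{\eps},\tfrac{1}{\gamma}\right)$ makes this at most $\alpha/4$. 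Consequently, the empirical fraction of points in $S_A$ on which $\mathbf{\hat{w}}$ has margin below $\gamma/10$ is at most $\alpha/4$, which is the analogue of Claim~\ref{cl:emp_error_approx}.

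To conclude, I would reuse verbatim the generalization argument of Claim~\ref{cl:generalization_approx}: apply Lemma~\ref{th:AB09} to the class of linear threshold functions on $\mathcal{B}_2^m$ (VC dimension $m+1$, growth function $(2n)^{m+1}+1$) with $\eta=\alpha/4$ and $\zeta=1$, giving $\mathrm{err}_{D_A}(f_{\mathbf{\hat{w}}})\leq 3\alpha/4$ with probability at least $1-\beta/4$; then lift from $D_{|\mathcal{G}_A}$ to $D$ using $\Pr[(\mathbf{x},y)\notin\mathcal{G}_A]\leq \alpha/4$ to get $\mathrm{err}_D(f_{\mathbf{\hat{w}}^\top A})\leq \alpha$. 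A union bound over the conditioned-matrix event, the Exponential Mechanism success event, and the uniform convergence event gives failure probability at most $\beta$. Privacy follows by the same post-processing argument as in Theorem~\ref{th:priv_approximate}: the matrix $A$ is data-independent, so for each fixed $A$ the Exponential Mechanism is $(\eps,0)$-DP on $S_A$ (Lemma~\ref{lem:expmech}), and averaging over $A$ preserves pure DP.

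The main obstacle is controlling the logarithm of the net size inside the Exponential Mechanism utility bound: since $m=\Theta(\log(n/\alpha\beta)/\gamma^2)$ itself depends on $n$, one must be slightly careful to verify that $\ln|\mathcal{W}|=O(\gamma^{-2}\log(1/\gamma)\log(n/\alpha\beta))$ still resolves to a bound of the claimed form $n=\frac{1}{\alpha\eps\gamma^2}\cdot\mathrm{polylog}(\cdots)$ after solving the implicit inequality; this is routine but is the only quantitatively subtle point. Everything else is a direct reuse of the JL/generalization infrastructure already developed for Theorem~\ref{th:UB_approximate}.
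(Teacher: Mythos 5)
Your proposal matches the paper's proof essentially step-for-step: condition on the good-matrix event via Lemma~\ref{lem:G_A} and Markov, use the $\gamma/10$-net to exhibit a net point with zero hinge violations (via the same Cauchy--Schwarz margin degradation $96\gamma/100 - \gamma/10 = 86\gamma/100 > \gamma/10$), apply the Exponential Mechanism accuracy bound with the covering-number estimate, and then reuse the VC-based generalization argument from Claim~\ref{cl:generalization_approx}. The only differences are cosmetic (naming of the net point, the exact constant in the covering-number bound), so this is the same proof.
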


The proof of Theorem~\ref{th:UB_pure} is very similar to that of the previous section, except for the bound on the empirical loss of the learned classifier. We prove this part here, and the full proof can be found in Appendix~\ref{app:pure}.

\begin{claim}\label{cl:emp_error_pure1}
If $n=\frac{1}{\alpha\eps\gamma^2}\cdot \mathrm{polylog}\left(\frac{1}{\alpha}, \frac{1}{\beta},  \frac{1}{\eps}, \frac{1}{\gamma}\right)$, then with probability $1-\beta/4$, for hypothesis $\mathbf{\hat{w}}$ returned by the Exponential Mechanism it holds that
\begin{equation}\label{eq:emp_risk_pure_a1}
\frac{1}{n}\sum_{(\mathbf{x}_A,y)\in S_A}\mathbbm{1}\left\{y\cdot \langle \mathbf{\hat{w}}, \mathbf{x}_A\rangle < \frac{\gamma}{10}\right\} \leq \frac{\alpha}{4}.
\end{equation}
\end{claim}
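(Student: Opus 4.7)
The plan is to exhibit a single net point with zero empirical utility and then invoke the accuracy guarantee of the Exponential Mechanism (Lemma~\ref{lem:expmech}).

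First I would produce a good net point. Under the conditioning of the main proof (inherited from Lemma~\ref{lem:G_A}), $\mathbf{w}_A^* = A\mathbf{w}^*/\|A\mathbf{w}^*\|_2$ is a unit vector satisfying $y\cdot\langle \mathbf{w}_A^*, \mathbf{x}_A\rangle \geq 96\gamma/100$ for every $(\mathbf{x}_A,y) \in S_A$. Since $\mathcal{W}$ is a $\gamma/10$-net of $\mathcal{B}_2^m$ and $\mathbf{w}_A^* \in \mathcal{B}_2^m$, there exists $\mathbf{w}' \in \mathcal{W}$ with $\|\mathbf{w}' - \mathbf{w}_A^*\|_2 \leq \gamma/10$. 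By Cauchy--Schwarz and $\|\mathbf{x}_A\|_2 = 1$, we get $|y\langle \mathbf{w}' - \mathbf{w}_A^*, \mathbf{x}_A\rangle| \leq \gamma/10$, so
\[
y\cdot \langle \mathbf{w}', \mathbf{x}_A\rangle \;\geq\; \tfrac{96\gamma}{100} - \tfrac{\gamma}{10} \;=\; \tfrac{86\gamma}{100} \;>\; \tfrac{\gamma}{10}
\]
for all $(\mathbf{x}_A,y) \in S_A$. Hence $u(S_A, \mathbf{w}') = 0$, and in particular $\max_{\mathbf{w}\in\mathcal{W}} u(S_A,\mathbf{w}) = 0$. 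This is the key step: the granularity $\gamma/10$ of the net is matched exactly to the $\gamma/10$ threshold in $u$, giving zero error rather than merely small error.

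Next I would bound the parameters needed for Lemma~\ref{lem:expmech}. The sensitivity is $\Delta u = 1/n$, since changing one labeled example flips at most one indicator. A standard volume argument gives $|\mathcal{W}| \leq (30/\gamma)^m$. Applying Lemma~\ref{lem:expmech} with failure probability $\beta/4$, with probability at least $1-\beta/4$,
\[
-u(S_A, \mathbf{\hat{w}}) \;\leq\; \frac{2}{n\eps}\ln\!\left(\frac{|\mathcal{W}|}{\beta/4}\right) \;=\; O\!\left(\frac{m\log(1/\gamma) + \log(1/\beta)}{n\eps}\right).
\]
Substituting $m = O(\log(n/\alpha\beta)/\gamma^2)$ and rearranging, the right-hand side is at most $\alpha/4$ provided $n = \frac{1}{\alpha\eps\gamma^2}\cdot\mathrm{polylog}(1/\alpha,1/\beta,1/\eps,1/\gamma)$. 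Since $-u(S_A,\mathbf{\hat{w}})$ is exactly the empirical margin error at threshold $\gamma/10$, this is the claim.

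There is no serious obstacle; the only real work is the bookkeeping that ties together (i) the conditioning that $\mathbf{w}_A^*$ has empirical margin $96\gamma/100$, (ii) the $\gamma/10$ net resolution being small enough to preserve a margin strictly above $\gamma/10$, and (iii) the standard $(30/\gamma)^m$ net size bound, so that the Exponential Mechanism loss matches the target $\alpha/4$ for the claimed $n$.
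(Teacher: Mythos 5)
Your proposal is correct and follows essentially the same argument as the paper: round $\mathbf{w}_A^*$ to a net point $\mathbf{w}'\in\mathcal{W}$ with $\|\mathbf{w}'-\mathbf{w}_A^*\|_2\le\gamma/10$, verify via Cauchy--Schwarz that $\mathbf{w}'$ has margin $86\gamma/100>\gamma/10$ on all of $S_A$ so $u(S_A,\mathbf{w}')=0$, and then apply the Exponential Mechanism accuracy guarantee with $\Delta u=1/n$ and $\ln|\mathcal{W}|=O(m\log(1/\gamma))$. The only cosmetic difference is the constant in the net-size bound, which is immaterial to the $\mathrm{polylog}$ accounting.
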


\begin{proof}[Proof of Claim~\ref{cl:emp_error_pure1}]
Every point in $\mathcal{B}_2^m$ is within $\gamma/10$ from a center of $\mathcal{W}$.
Let $\mathbf{w}_c^*$ be the center within $\gamma/10$ from $\mathbf{w}^*_A$, that is, 
\begin{equation}\label{eq:w_center1}
\|\mathbf{w}_A^* - \mathbf{w}_c^*\|_2 \leq \gamma/10.
\end{equation}

Recall that in our conditioned probability space, 
\begin{equation}\label{eq:w_star_A1}
y\cdot\langle \mathbf{w}^*_A, \mathbf{x}_A \rangle \geq 96\gamma/100
\end{equation}
holds for all $(\mathbf{x}_A,y)\in S_A$.
Therefore, for all $(\mathbf{x}_A,y)\in S_A$, 
\begin{align*}
y\cdot\langle \mathbf{w}^*_c, \mathbf{x}_A \rangle 
&= y\cdot\langle \mathbf{w}^*_A, \mathbf{x}_A \rangle - y\cdot\langle \mathbf{w}^*_A - \mathbf{w}^*_c, \mathbf{x}_A \rangle \\
&\geq y\cdot\langle \mathbf{w}^*_A, \mathbf{x}_A \rangle - \|\mathbf{w}_A^* - \mathbf{w}_c^*\|_2\cdot \|\mathbf{x}_A\|_2 \\
&\geq 96\gamma/100 - \gamma/10 = 86\gamma/10 > \gamma/10. \tag{by inequalities~\eqref{eq:w_center1}, \eqref{eq:w_star_A1}}
\end{align*}

It follows that 
\begin{equation}\label{eq:opt_exp1}
\max\limits_{\mathbf{w}\in\mathcal{W}} u(S_A,\mathbf{w}) \geq u(S_A, \mathbf{w}^*_c) = -\frac{1}{n}\sum_{(\mathbf{x}_A,y)\in S_A}\mathbbm{1}\left\{y\cdot \langle \mathbf{w}, \mathbf{x}_A\rangle < \frac{\gamma}{10}\right\} = 0.
\end{equation}

By Lemma~\ref{lem:expmech} and inequality~\eqref{eq:opt_exp1}, with probability at least $1-\beta/4$, it holds that:
\begin{equation}\label{eq:emp_risk_pure1}
\frac{1}{n}\sum_{(\mathbf{x}_A,y)\in S_A}\mathbbm{1}\left\{y\cdot \langle \mathbf{\hat{w}}, \mathbf{x}_A\rangle < \frac{\gamma}{10}\right\} \leq \frac{2}{n\eps}(\ln(|\mathcal{W}|) + \ln(4/\beta))
\end{equation}

It is a well-known result that the covering number of an $m$-dimensional unit ball by balls of radius $\gamma/10$ is at most $O\left(\left(\frac{1}{\gamma/10}\right)^m\right)$. Therefore, substituting $m=O\left(\frac{\log(n/\alpha\beta)}{\gamma^2}\right)$, it follows that 
\begin{equation*}\label{eq:coverno1}
\ln|\mathcal{W}| = \frac{1}{\gamma^2}\cdot \mathrm{polylog}\left(n,\frac{1}{\alpha}, \frac{1}{\beta}, \frac{1}{\gamma}\right).
\end{equation*} 

Thus, by inequality~\eqref{eq:emp_risk_pure1}, if $n=\frac{1}{\alpha\eps\gamma^2}\cdot\mathrm{polylog}\left(\frac{1}{\alpha}, \frac{1}{\beta}, \frac{1}{\gamma}, \frac{1}{\eps}\right)$ then with probability at least $1-\beta/4$,
\begin{equation*}
\frac{1}{n}\sum_{(\mathbf{x}_A,y)\in S_A}\mathbbm{1}\left\{y\cdot \langle \mathbf{\hat{w}}, \mathbf{x}_A\rangle < \frac{\gamma}{10}\right\} \leq \frac{\alpha}{4}.
\end{equation*}
This completes the proof of the claim. \end{proof}

\begin{theorem}[Privacy guarantee]\label{th:priv_pure}
Algorithm $\mathcal{A}_{\alpha,\beta,\eps,\gamma}$ is $(\eps,0)$-differentially private.
\end{theorem}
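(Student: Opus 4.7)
The plan is to mirror the proof of Theorem~\ref{th:priv_approximate}, replacing the invocation of the private ERM oracle $\mathcal{F}$ by the Exponential Mechanism $\mathcal{M}_E$. First, by the post-processing property (Lemma~\ref{lem:post-processing}), the final mapping $\mathbf{\hat{w}} \mapsto \mathbf{\hat{w}}^\top A$ does not affect privacy once $A$ and $\mathbf{\hat{w}}$ are produced, so it suffices to show that the variant $\mathcal{N}$ of $\mathcal{A}_{\alpha,\beta,\eps,\gamma}$ which outputs $\mathbf{\hat{w}}$ (rather than $\mathbf{\hat{w}}^\top A$) is $(\eps,0)$-DP.

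Next, I would verify the sensitivity bound $\Delta u \leq 1/n$. For neighboring datasets $S \sim S'$ that differ in one data point, the sum $\sum_{(\mathbf{x},y)\in D}\mathbbm{1}\{y\cdot \langle \mathbf{w}, \mathbf{x}\rangle < \gamma/10\}$ changes by at most one in absolute value (a single $\{0,1\}$-valued indicator is swapped out and another is swapped in), so $|u(S,\mathbf{w}) - u(S',\mathbf{w})| \leq 1/n$ uniformly over $\mathbf{w}\in\mathcal{W}$. Importantly, this bound holds deterministically, for every fixed realization of the projection matrix $A$, since the projected neighbors $S_A$ and $S_A'$ still differ in only one data point. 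Consequently, by Lemma~\ref{lem:expmech}, for each fixed $A$ the Exponential Mechanism $\mathcal{M}_E(S_A, u, \mathcal{W})$ is $(\eps,0)$-differentially private.

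Finally, I would integrate over the randomness of $A$, which is independent of the input dataset, exactly as in Theorem~\ref{th:priv_approximate}. Letting $U = \{\pm 1/\sqrt{m}\}^{m\times d}$ and fixing any measurable $\mathcal{R}\subseteq \mathbb{R}^m$,
\begin{align*}
\Pr[\mathcal{N}(S)\in \mathcal{R}]
&= \sum_{A\in U}\Pr[A]\cdot \Pr[\mathcal{M}_E(S_A, u, \mathcal{W})\in \mathcal{R} \mid A] \\
&\leq \sum_{A\in U}\Pr[A]\cdot \exp(\eps)\,\Pr[\mathcal{M}_E(S_A', u, \mathcal{W})\in \mathcal{R} \mid A] \\
&= \exp(\eps)\,\Pr[\mathcal{N}(S')\in \mathcal{R}],
\end{align*}
which gives $(\eps,0)$-DP for $\mathcal{N}$ and hence, by post-processing, for $\mathcal{A}_{\alpha,\beta,\eps,\gamma}$. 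There is no real obstacle here: the only substantive step is the sensitivity calculation, and everything else is a routine adaptation of the privacy argument from Section~\ref{sec:approx}, with the bonus that we do not need a $\delta$ slack since the Exponential Mechanism achieves pure DP.
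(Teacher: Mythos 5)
Your proposal is correct and follows essentially the same approach as the paper: bound the sensitivity by $1/n$ to get $(\eps,0)$-DP of the Exponential Mechanism for each fixed $A$, then average over the (data-independent) randomness of $A$ and invoke post-processing for the final map $\mathbf{\hat{w}}\mapsto\mathbf{\hat{w}}^\top A$. The only cosmetic difference is the order in which you present the post-processing and sensitivity steps.
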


\begin{proof}[Proof of Theorem \ref{th:priv_pure}]
The sensitivity of the utility function is \[\Delta u = \max\limits_{\mathbf{w}\in\mathcal{W}} \max\limits_{\substack{Z,Z'\in\left(\mathcal{X}\times\{\pm1\}\right)^n \\ Z\sim Z'}} {| u(Z,\mathbf{w})-u(Z', \mathbf{w})|} \leq \frac{1}{n}.\] It follows by Lemma~\ref{lem:expmech} that $\mathcal{M}_E$ is $(\eps,0)$-DP.

By Lemma~\ref{lem:post-processing}, $(\eps,0)$-differential privacy is closed under post-processing. Therefore it suffices to show that an algorithm $\mathcal{N}$ that is the same as $\mathcal{A}_{\alpha,\beta,\eps,\gamma}$ except that it returns $\mathbf{\hat{w}}$ instead of $\mathbf{\hat{w}}^\top A$, is $(\eps,0)$-DP.

Let $S$ and $S'$ be two neighboring sample sets such that $S=S'\setminus\{(\mathbf{x}', y')\}\cup\{(\mathbf{x},y)\}$. Let $U=\left\{\pm\frac{1}{\sqrt{m}}\right\}^{m\times d}$. If we fix a matrix $A\in U$, the sample sets $S$ and $S'$ would correspond to $\mathcal{M}_E$'s inputs $S_A$ and $S'_A = S_A \setminus \{(\mathbf{x}'_A, y')\}\cup\{(\mathbf{x}_A,y)\}$, respectively. For any measurable set $\mathcal{R}\subseteq\mathbb{R}^m$, it holds that
\begin{align*}
\Pr[\mathcal{N}(S) \in \mathcal{R}] & = \sum_{A\in U} \Pr[A]\cdot\Pr[\mathcal{M}_E(S_A) \in \mathcal{R} \mid A] \\
& \leq \sum_{A\in U} \Pr[A]\cdot\exp(\eps)\Pr[\mathcal{M}_E(S_A')\in \mathcal{R} \mid A] \tag{since $\mathcal{M}_E$ is $(\eps,0)$-DP} \\ 
& = \exp(\eps)\sum_{A\in U} \Pr[A]\cdot\Pr[\mathcal{M}_E(S_A')\in \mathcal{R} \mid A]\\
& = \exp(\eps) \Pr[\mathcal{N}(S') \in \mathcal{R}].
\end{align*}
 Therefore, $\mathcal{N}$ is $(\eps,0)$-DP, and so is  $\mathcal{A}_{\alpha,\beta,\eps,\gamma}$.
\end{proof}

We note that the running time of algorithm $\mathcal{A}_{\alpha,\beta,\eps,\gamma}$ is exponential in $1/\gamma$, due to the use of the exponential mechanism to select from a net of size $\exp(1/\gamma)$.  Although there are efficient implementations of the exponential mechanism for optimizing over \emph{continuous} domains \citep{BST14}, relaxing the domain to be continuous would lead to higher sample complexity.


\section{A Sample Complexity Lower Bound for Pure Differential Privacy}\label{sec:lb}

In this section we prove a lower bound on the sample complexity of any $(\eps,0)$-differentially private algorithm for learning a large-margin halfspace.

\begin{theorem}\label{th:LB}
Any $(\eps,0)$-differentially private $(\frac{1}{10},\frac{1}{10} ,\gamma)$-learner for halfspaces in $\R^{\Omega(1/\gamma^2)}$ requires $\Omega\left(1 / \eps \gamma^2\right)$ samples.
\end{theorem}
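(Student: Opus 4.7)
The plan is a standard packing plus group-privacy lower bound. Fix $d = \Theta(1/\gamma^2)$. I will first build a large family of realizable margin-$\gamma$ distributions that are pairwise incompatible (no single classifier is accurate on two of them), and then use group privacy to convert the packing into a sample-complexity lower bound. For the packing, by Gilbert--Varshamov (or a random-code plus Chernoff analysis) there exist $N = 2^{\Omega(d)} = 2^{\Omega(1/\gamma^2)}$ codewords $\mathbf{v}_1,\dots,\mathbf{v}_N \in \{\pm 1\}^d$ with pairwise Hamming distance strictly greater than $d/5$. For each $i$ I set $\mathbf{w}_i := \mathbf{v}_i/\sqrt{d}$, which has unit norm, and let $D_i$ be the distribution uniform over the $d$ labeled unit vectors $\bigl(v_{i,k}\,\mathbf{e}_k,\,+1\bigr)$, $k\in[d]$. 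A one-line computation gives $y\langle \mathbf{w}_i,\mathbf{x}\rangle = 1/\sqrt{d} = \gamma$ on the support of $D_i$, so $D_i$ is realizable by $\mathbf{w}_i$ with margin $\gamma$.

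The disjointness step is then clean: the error of $f_{\mathbf{w}}$ on $D_i$ equals the fraction of coordinates $k$ with $\mathrm{sign}(w_k)\ne v_{i,k}$. So any $\mathbf{w}$ with error $\le 1/10$ on both $D_i$ and $D_j$ would force $\mathrm{sign}(\mathbf{w})$ to agree with both $\mathbf{v}_i$ and $\mathbf{v}_j$ on at least $9d/10$ coordinates, and hence $\mathbf{v}_i$ and $\mathbf{v}_j$ to agree on at least $8d/10 = 4d/5$ coordinates, contradicting the code's minimum distance. Consequently the sets $E_i := \{\mathbf{w}\in\mathcal{B}_2^d : \mathrm{err}_{D_i}(f_{\mathbf{w}})\le 1/10\}$ are pairwise disjoint.

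Finally, I close with group privacy. The PAC guarantee gives $\Pr_{S\sim D_i^n}[\mathcal{A}(S)\in E_i] \ge 9/10$ for every $i$. Because any two datasets of size $n$ differ in at most $n$ entries, pointwise $(\eps,0)$-group privacy with group size $n$, integrated against the product $D_i^n \otimes D_j^n$, yields $\Pr_{S\sim D_i^n}[\mathcal{A}(S)\in E_i] \le e^{\eps n}\,\Pr_{S'\sim D_j^n}[\mathcal{A}(S')\in E_i]$ for all $i,j$. Fixing $j=1$, rearranging, summing over $i$, and using the disjointness of the $E_i$ gives $1 \ge \sum_{i=1}^N \Pr_{S\sim D_1^n}[\mathcal{A}(S)\in E_i] \ge N\cdot (9/10)\,e^{-\eps n}$, hence $n = \Omega(\log N/\eps) = \Omega(1/\eps\gamma^2)$, as required. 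If the ambient dimension exceeds $\Theta(1/\gamma^2)$ the construction embeds by zero-padding.

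The only non-routine part is calibrating the code: its minimum distance must strictly exceed $d/5$ so the $E_i$ stay disjoint at $1/10$-accuracy, while its rate must remain a positive constant so that $\log N = \Omega(d) = \Omega(1/\gamma^2)$. Since $H(1/5) < 1$, Gilbert--Varshamov comfortably delivers both; everything downstream (the disjointness bookkeeping and the group-privacy sum) is standard. I expect this parameter calibration to be the only real choice in the proof, with the remaining steps essentially mechanical.
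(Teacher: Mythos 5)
Your proof is correct in outline and uses the same high-level strategy as the paper: construct a packing of margin-$\gamma$ distributions, show the corresponding sets of low-error classifiers are pairwise disjoint, and close with a group-privacy (packing) argument that forces $n = \Omega(\log N / \eps)$. The genuinely different ingredient is your choice of hard instances. You take $D_i$ to be uniform over the $d$ signed coordinate vectors $(v_{i,k}\mathbf{e}_k, +1)$, $k\in[d]$, with $\mathbf{w}_i = \mathbf{v}_i/\sqrt{d}$, so the margin is exactly $1/\sqrt{d} = \gamma$ and disjointness reduces to a one-line Hamming-distance computation: agreement of $\mathrm{sign}(\mathbf{w})$ with each of $\mathbf{v}_i,\mathbf{v}_j$ on $9d/10$ coordinates forces $\mathbf{v}_i,\mathbf{v}_j$ to agree on $\geq 8d/10$ coordinates, violating the code distance. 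The paper instead takes $D^{(i)}$ to be uniform on the sphere minus the margin slab, with $y = \mathrm{sign}(\langle\mathbf{w}^{(i)},\mathbf{x}\rangle)$, and its disjointness proof requires bounding the margin mass $p_\gamma$ (via Gaussian and chi-squared tail estimates) and a spherical angle calculation relating $\langle\mathbf{w}^{(i)},\mathbf{w}^{(j)}\rangle$ to classifier disagreement. Your discrete construction thus trades the geometric/probabilistic work for pure combinatorics, which is a cleaner and more elementary route to the same bound.

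One caveat worth tightening: your disjointness step implicitly treats $\mathrm{sign}(0)$ as an error for every label, i.e.\ as lying outside $\{\pm 1\}$. Under the alternative convention $\mathrm{err}_{D}(f_\mathbf{w}) = \Pr[y\langle\mathbf{w},\mathbf{x}\rangle < 0]$ (which appears in the paper's preliminaries), a zero inner product is not an error, and then a classifier $\mathbf{w}$ defined by $w_k = 0$ on all coordinates where $v_{i,k}\neq v_{j,k}$ and $w_k = v_{i,k}$ elsewhere has error $0$ on both $D_i$ and $D_j$, so $E_i\cap E_j \neq \emptyset$. Because your distributions have finite support concentrated on coordinate axes, the argument is sensitive to this boundary convention; the paper's continuous construction avoids the issue since the hyperplane $\{\mathbf{x} : \langle\mathbf{\hat{w}},\mathbf{x}\rangle = 0\}$ has measure zero for $\mathbf{\hat{w}}\neq 0$. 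To make your argument airtight you should state the strict convention (count $y\langle\mathbf{w},\mathbf{x}\rangle \le 0$ as an error, matching the $\Pr[y = \mathrm{sign}(\langle\mathbf{\hat w},\mathbf{x}\rangle)]$ form of the learner guarantee with $\mathrm{sign}(0)=0$), or else slightly perturb the support off the axes.
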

\begin{proof}
Our proof uses a standard \emph{packing argument}.  We construct distributions $D^{(1)},\dots,D^{(K)}$ over $\mathcal{B}^d_2\times \{\pm 1\}$ for $d = 1/1000\gamma^2$ and $K = 2^{d/20}$.  We will construct these distributions so that no classifier is simultaneously accurate for two distinct distributions $D^{(i)}$ and $D^{(j)}$.  This will imply that $n = \Omega(\log(K)/\eps) = \Omega(1/\eps \gamma^2)$ samples are necessary to achieve $(\eps,0)$-differential privacy.

Each distribution $D^{(i)}$ is defined with respect to a halfspace $\mathbf{w}^{(i)}\in\{\pm1 / \sqrt{d}\}^d$, and has margin $\gamma$ with respect to this halfspace.  That is
$$\Pr_{(\mathbf{x},y)\sim D^{(i)}}[y\cdot \langle \mathbf{w}^{(i)},\mathbf{x}\rangle \geq \gamma]=1.$$ In addition, $\mathbf{x}$ is distributed uniformly at random on the remaining surface of $\mathcal{B}^d_2$ so that is does not violate the margin and $y = \mathrm{sign}(\langle \mathbf{w}^{(i)},\mathbf{x}\rangle)$. Formally, if $U$ denotes the uniform distribution on $\mathcal{B}^d_2$ and $f_U$ is its density function, then the probability density function of $X$ where $(X,y)\sim D^{(i)}$, is \[f_{X}(\mathbf{x'}) = \begin{cases} f_{U}(\mathbf{x'})/\Pr_{\mathbf{x}\sim U}[|\langle \mathbf{w}^{(i)},\mathbf{x}\rangle|\geq \gamma], &\mbox{if $| \langle \mathbf{w}^{(i)},\mathbf{x'}\rangle|\geq \gamma$} \\ 0, & \mbox {otherwise}. \end{cases}\]

Using standard constructions of error correcting codes, there exists a set $\mathbf{w}^{(1)}, \ldots, \mathbf{w}^{(K)}$ such that the Hamming distance of any pair $i \neq j$, is $\mathrm{Ham}(\mathbf{w}^{(i)}, \mathbf{w}^{(j)})\geq d/10$. This implies that $\langle \mathbf{w}^{(i)}, \mathbf{w}^{(j)} \rangle \leq 4/5$.  

Let $\{\mathbf{w}^{(1)}, \ldots, \mathbf{w}^{(K)}\}$ be such a set and let $D^{(1)},\dots,D^{(K)}$ be the resulting distributions.  The crux of the proof is in establishing the following claim about this set of distributions.  For each distribution $D^{(i)}$, we define the set
$$
\mathcal{G}^{(i)} = \left\{ \mathbf{\hat{w}} \in \mathcal{B}^d_{2} : \mathrm{err}_{D^{(i)}}(\mathbf{\hat{w}}) \leq \frac{1}{10} \right\}
$$
of all classifiers that have error at most $1/10$ on the distribution $D^{(i)}$.
\begin{claim}\label{cl:disjoint}
For every $i \neq j$, $\mathcal{G}^{(i)}$ and $\mathcal{G}^{(j)}$ are disjoint.
\end{claim}

Using this claim, we can complete the proof as follows.  Let $S^{(1)} \sim \left(D^{(i)}\right)^n$ denote a random iid sample of $n$ examples from $D^{(i)}$. Let $\mathcal{A}$ be an $(\eps,0)$-differentially private $(\frac{1}{10},\frac{1}{10},\gamma)$ learner.  By privacy and accuracy, we have for every $i \in \{2,3,\dots,K\}$
\[\Pr[\mathcal{A}(S^{(1)})\in \mathcal{G}^{(i)}] \geq \exp(-n\eps)\Pr[\mathcal{A}(S^{(1)})\in \mathcal{G}^{(1)}] \geq \frac{9}{10} \exp(-n\eps).\]
Since the sets $\mathcal{G}^{(i)}$ are disjoint, 
$$
\Pr[\mathcal{A}(S^{(1)})\notin \mathcal{G}^{(1)}] \geq \sum_{i=2}^{K} \Pr[\mathcal{A}(S^{(1)})\in \mathcal{G}^{(i)}] \geq \sum_{i=2}^{K} \frac{9}{10} \exp(-n\eps) = \frac{9}{10} (K-1) \exp(-n\eps).
$$
Since, by accuracy, $\Pr[\mathcal{A}(S^{(1)})\notin \mathcal{G}^{(1)}]\leq \frac{1}{10}$, it follows that
\[\frac{9}{10} (K-1)\exp(-n\eps) \leq \frac{1}{10} \]
Rearranging, and substituting our choice of $K$, we conclude $n = \Omega(1/\eps \gamma^2)$.

Let us now prove Claim~\ref{cl:disjoint}, which will complete the proof.
\begin{proof}[Proof of Claim~\ref{cl:disjoint}]
We will show that for an arbitrary $\mathbf{\hat{w}}\in\mathcal{B}^d_2$, and every $i \neq j$, if $\mathrm{err}_{D^{(i)}}(\mathbf{\hat{w}}) \leq 1/10$ then $\mathrm{err}_{D^{(j)}}(\mathbf{\hat{w}}) > 1/10$.  Let $U_i$ be the distribution over $\mathcal{B}^d_2\times\{\pm 1\}$ such that $(\mathbf{x},y)\sim U_i$ if $\mathbf{x}\sim U$ is uniform over the unit sphere in $\R^d$ and $y=\mathrm{sign}(\langle \mathbf{w}^{(i)}, \mathbf{x}\rangle)$.
Define the probability
\[p_\gamma=\Pr\limits_{\mathbf{x}\sim U}[|\langle \mathbf{w}, \mathbf{x}\rangle | < \gamma] \]
of a uniformly distributed point on $\mathcal{B}^d_2$ lying within margin $\gamma$ of a unit vector $\mathbf{w}$.
Probability $p_\gamma$ remains unchanged if we replace $\mathbf{w}$ with any other unit vector and, more specifically,
\[p_\gamma = \Pr\limits_{(\mathbf{x}, y)\sim U_i}[|\langle \mathbf{w}^{(i)}, \mathbf{x}\rangle | < \gamma]\] holds for all $U_i, i\in[K]$. 

The next lemma will allow us to show that $U_i$ is not too far from $D^{(i)}$.
\begin{lemma}\label{margin_mass}
If $d=\frac{1}{1000\gamma^2}$ then $p_\gamma \leq 0.2.$
\end{lemma}
\begin{proof}
Consider the following sampling process from the uniform distribution on the sphere: choose each coordinate $x_i\sim \mathcal{N}(0,1)$ and normalize with $\|\mathbf{x}\|_2$. By the symmetric property of multi-dimensional gaussian vectors, we know that the projection of $\mathbf{x}$ on a unit vector $\mathbf{w}$ is distributed as $x_1/\|\mathbf{x}\|_2$, where $x_1 \sim \mathcal{N}(0,1)$ is the first coordinate of $\mathbf{x}$ and $\|\mathbf{x}\|_2^2 \sim \tilde{\chi}^2(d)$ is the square of the normalization factor. The probability of a point having margin more than $\gamma$ from $\mathbf{w}$ is:\begin{align*}
1-p_\gamma & = \Pr\limits_{\mathbf{x} \sim U}\left[\frac{|x_1|}{\|\mathbf{x}\|_2}   \geq \gamma\right]\\
&\geq \Pr\limits_{\mathbf{x}\sim U}\left[\left(|x_1| \geq \frac{1}{10} \right)\land \left(\|\mathbf{x}\|_2 \leq \frac{1}{10\gamma}\right)\right]\\
& = 1 - \Pr\limits_{\mathbf{x}\sim U}\left[\left(|x_1| < \frac{1}{10}\right) \lor \left(\|\mathbf{x}\|_2 > \sqrt{10d}\right)\right] \tag{$d=\frac{1}{1000\gamma^2}$}\\
& \geq 1-  \Pr\limits_{x_1\sim \mathcal{N}(0,1)}\left[|x_1| < \frac{1}{10}\right] -  \Pr\limits_{\|\mathbf{x}\|^2_2\sim  \tilde{\chi}^2(d)}\left[\|\mathbf{x}\|_2^2 > 10d\right]
\end{align*}

By the tables of the standard normal distribution we have that $\Pr\limits_{x_1\sim \mathcal{N}(0,1)}\left[|x_1| < \frac{1}{10}\right] \leq 0.08$. Also, the mean of a $\tilde{\chi}^2(d)$ distributed variable is $d$. By Markov's inequality, it follows that \[\Pr\limits_{\|\mathbf{x}\|^2_2\sim  \tilde{\chi}^2(d)}\left[\|\mathbf{x}\|_2^2 >10d\right] \leq d/10d=1/10.\]

Thus, $p_\gamma \leq 0.18$.
\end{proof}

We can apply the preceding lemma to relate $\mathrm{err}_{U_i}(\mathbf{\hat{w}})$ to $\mathrm{err}_{D^{(i)}}(\mathbf{\hat{w}})$.  Specifically, for any $\mathbf{\hat{w}}$ with $\mathrm{err}_{D^{(i)}}(\mathbf{\hat{w}}) \leq 0.10$, it holds that:
\begin{align*}
\mathrm{err}_{U_i}(\mathbf{\hat{w}}) 
&= \Pr\limits_{(\mathbf{x}, y)\sim U_i}[\mathrm{sign}(\langle \mathbf{\hat{w}}, \mathbf{x}\rangle) \neq \mathrm{sign}(\langle \mathbf{w}^{(i)}, \mathbf{x}\rangle)]\\
& \leq \Pr\limits_{(\mathbf{x}, y)\sim U_i}[\mathrm{sign}(\langle \mathbf{\hat{w}}, \mathbf{x}\rangle) \neq \mathrm{sign}(\langle \mathbf{w}^{(i)}, \mathbf{x}\rangle) \mid |\langle \mathbf{w}^{(i)}, \mathbf{x}\rangle | \geq \gamma] \cdot \Pr\limits_{(\mathbf{x}, y)\sim U_i}[|\langle \mathbf{w}^{(i)}, \mathbf{x}\rangle | \geq \gamma]\\
& \hspace{1.5cm} + \Pr\limits_{(\mathbf{x}, y)\sim U_i}[|\langle \mathbf{w}^{(i)}, \mathbf{x}\rangle | < \gamma]\\
& \leq 0.1+0.2= 0.3
\end{align*}
Therefore,
\begin{equation}\label{one}
\mathrm{err}_{U_i}(\mathbf{\hat{w}})  \leq 0.3.
\end{equation}

Next we will argue that the same vector $\mathbf{\hat{w}}$ cannot have low error with respect to some other distribution $U_j$.  Fix any two vectors $\mathbf{w}^{(i)}, \mathbf{w}^{(j)}$ as in our construction.  Consider the plane defined by these vectors and let $\theta$ be their angle. It holds that $$\Pr_{\mathbf{x}\sim U}[\mathrm{sign}(\langle \mathbf{w}^{(i)}, \mathbf{x}\rangle) = \mathrm{sign}(\langle \mathbf{w}^{(j)}, \mathbf{x}\rangle)] = \frac{2\theta}{2\pi}=\frac{\theta}{\pi} = \frac{\cos^{-1}(\langle \mathbf{w}^{(i)},\mathbf{w}^{(j)}\rangle)}{\pi}.$$ Since $\mathrm{Ham}(\mathbf{w}^{(i)},\mathbf{w}^{(j)})\geq d/10$, $\langle \mathbf{w}^{(i)},\mathbf{w}^{(j)}\rangle \leq \frac{1}{d}(9d/10-d/10) = \frac{8}{10}$. Thus,  \[\Pr_{\mathbf{x}\sim U}[\mathrm{sign}(\langle \mathbf{w}^{(i)}, \mathbf{x}\rangle) = \mathrm{sign}(\langle \mathbf{w}^{(j)}, \mathbf{x}\rangle)] \leq \frac{\cos^{-1}(8/10)}{\pi} =0.21\]

For the error of $\mathbf{\hat{w}}$ on distribution $U_j$ it holds that:
\begin{align*}
&\Pr\limits_{(\mathbf{x}, y)\sim U_j}[\mathrm{sign}(\langle \mathbf{\hat{w}}, \mathbf{x}\rangle) = \mathrm{sign}(\langle \mathbf{w}^{(j)}, \mathbf{x}\rangle)]\\
& \leq \Pr\limits_{(\mathbf{x}, y)\sim U_j}[\mathrm{sign}(\langle \mathbf{\hat{w}}, \mathbf{x}\rangle) \neq \mathrm{sign}(\langle \mathbf{w}^{(i)}, \mathbf{x}\rangle) ] + \Pr\limits_{(\mathbf{x}, y)\sim U_j}[\mathrm{sign}(\langle \mathbf{w}^{(j)}, \mathbf{x}\rangle) = \mathrm{sign}(\langle \mathbf{w}^{(i)}, \mathbf{x}\rangle) ]\\
& \leq 0.3 + 0.21 = 0.51 \tag{by \eqref{one}}
\end{align*}
Therefore, 
\begin{equation}\label{two}
\mathrm{err}_{U_j}(\mathbf{\hat{w}})\geq 0.49.
\end{equation}

Once again, we can relate this to the error on the distribution $D^{(j)}$ as follows.
\begin{align*}
&\Pr\limits_{(\mathbf{x}, y)\sim D^{(j)}}[\mathrm{sign}(\langle \mathbf{\hat{w}}, \mathbf{x}\rangle) \neq \mathrm{sign}(\langle \mathbf{w}^{(j)}, \mathbf{x}\rangle)]\\
& = \Pr\limits_{(\mathbf{x}, y)\sim U_j}[\mathrm{sign}(\langle \mathbf{\hat{w}}, \mathbf{x}\rangle) \neq \mathrm{sign}(\langle \mathbf{w}^{(j)}, \mathbf{x}\rangle)\mid |\langle \mathbf{w}^{(j)}, \mathbf{x}\rangle| \geq \gamma]\cdot  \Pr\limits_{(\mathbf{x}, y)\sim U_j}[|\langle \mathbf{w}^{(j)}, \mathbf{x}\rangle| \geq \gamma]\\
& \geq \Pr\limits_{(\mathbf{x}, y)\sim U_j}[\mathrm{sign}(\langle \mathbf{\hat{w}}, \mathbf{x}\rangle) \neq \mathrm{sign}(\langle \mathbf{w}^{(j)}, \mathbf{x}\rangle)] - \Pr\limits_{(\mathbf{x}, y)\sim U_j}[|\langle \mathbf{w}^{(j)}, \mathbf{x}\rangle | < \gamma] \\
& \geq 0.49-p_\gamma \geq 0.29  \tag{by \eqref{two}}
\end{align*}
Therefore, $\mathrm{err}_{D^{(j)}}(\mathbf{\hat{w}})\geq 0.29$.  Thus, for any $\mathbf{\hat{w}}$, if $\mathrm{err}_{D^{(i)}}(\mathbf{\hat{w}}) \leq 0.1$ then $\mathrm{err}_{D^{(j)}}(\mathbf{\hat{w}}) \geq 0.29$. \end{proof}
This completes the proof of the lower bound.
\end{proof}

\section*{Acknowledgements}
JU was supported by NSF grants CCF-1718088, CCF-1750640, and CNS-1816028. HN and LZ were supported by NSF grants CCF-1750716 and CCF-1909314.

\bibliography{DP_Halfspace}{}
\bibliographystyle{plainnat}

\newpage
\appendix
\section{Algorithm and proofs of Section~\ref{sec:approx}}\label{app:algo}
\subsection{Differentially Private Empirical Risk Minimization Algorithm $\mathcal{F}$}
A complete algorithm, deploying the differentially private stochastic gradient descent algorithm by~\citet{BST14}, is presented below in Algorithm~\ref{algo:complete}.
We denote by $\Pi_{\mathcal{C}}(\cdot)$ the euclidean projection on $\mathcal{C}$ and by $\|\mathcal{C}\|_2$ the diameter of $\mathcal{C}$.
\begin{algorithm}[H]
\caption{$\mathcal{A}_{\alpha,\beta,\eps,\delta,\gamma}(S)$}\label{algo:complete}
\renewcommand{\thealgorithm}{}
\begin{algorithmic}[1]
\State Choose a random matrix $A\in \mathbb{R}^{m \times d}$, where $m=O\left(\frac{\log(1/\beta_{\mathit{JL}})}{\gamma^2}\right)$, $\beta_{\mathit{JL}}=\alpha\beta^2/64n$, and $$A_{ij} = \begin{cases}
        +1/\sqrt{m} &\text{w.p.~} 1/2\\
        -1/\sqrt{m} &\text{w.p.~} 1/2.
        \end{cases}$$\;
\State Define $S_A \gets \left\{\left(A\mathbf{x} / \|A\mathbf{x} \|_2,y\right) \mid (\mathbf{x},y)\in S\right\}$.\;
\State Define the hypothesis set $\mathcal{C} \gets \mathcal{B}_2^m$.\;
\State Define the $\frac{100}{86\gamma}$-Lipschitz loss function $\ell : \mathcal{C}\times\left(\mathcal{B}_2^m\times\{\pm 1\}\right) \to \mathbb{R}$ as $$\ell(\mathbf{w};(\mathbf{x},y)) =\mathbbm{1}{\left\{y\cdot \langle \mathbf{w}, \mathbf{x}\rangle < \frac{96\gamma}{100}\right\}} \cdot \left( \frac{96}{86}-\frac{y\cdot \langle \mathbf{w}, \mathbf{x}\rangle}{86\gamma/100}\right).$$\;
\State Let $\mathbf{\hat{w}} \gets \mathcal{F}(S_A, \ell, (\eps,\delta), \mathcal{C})$.\;
\State \Return $\mathbf{\hat{w}}^\top A$.\;
\vspace{0.5cm}
\Procedure{$\mathcal{F}$}{$D, \ell, (\eps,\delta), \mathcal{C}$}
\For{$i=1$ to $\lceil\log(8/\beta)\rceil$}
\State $\mathbf{\hat{w}}^{(i)} \gets \mathcal{A}_{\textup{Noise-GD}}(D, \ell, (\eps/\lceil\log(8/\beta)\rceil,\delta/\lceil\log(8/\beta)\rceil), \mathcal{C})$
\EndFor
\State $\mathcal{W} \gets \{\mathbf{\hat{w}}^{(1)}, \ldots, \mathbf{\hat{w}}^{(\lceil\log(8/\beta)\rceil)}\}$
\State \Return $\mathbf{\hat{w}}\gets \mathcal{M}_E(D,-\ell,\mathcal{W})$
\EndProcedure
\vspace{0.5cm}
\Procedure{$\mathcal{A}_{\textup{Noise-GD}}$}{$D, \ell, (\eps',\delta'), \mathcal{C}$}
\State Noise variance $\sigma^2 \gets \frac{32L^2n^2\log(n/\delta')\log(1/\delta')}{\eps'^2}$, where $L$ is the Lipschitz constant of $\ell$.
\State Learning rate function $\eta: [n^2] \rightarrow \mathbb{R}$: $\eta(t) = \frac{\|\mathcal{C}\|_2}{\sqrt{t(n^2L^2+m\sigma^2)}}$.
\State Choose a point from $\mathcal{C}$, $\mathbf{w}_1$.
\For{$t=1$ to $n^2-1$}
\State Pick $(\mathbf{x},y) \sim_u D $ with replacement.
\State $\mathbf{w}_{t+1} \gets \Pi_{\mathcal{C}} \left( \mathbf{w}_t - \eta(t) \left[ n\nabla \ell(\mathbf{w}_t ; (\mathbf{x},y)) + b_t\right]\right)$, where $b_t \sim \mathcal{N}(0, \mathbb{I}_m\sigma^2)$.
\EndFor
\State \Return $\mathbf{w}_{n^2}$.
\EndProcedure
\end{algorithmic}
\end{algorithm}

To achieve a high-probability guarantee, algorithm $\mathcal{F}$ runs $\mathcal{A}_{\textup{Noise--GD}}$ $\lceil\log(8/\beta)\rceil$ times, with privacy parameters $\eps/\lceil\log(8/\beta)\rceil$ and $\delta/\lceil\log(8/\beta)\rceil$, and uses the Exponential Mechanism to pick the best hypothesis $\mathbf{\hat{w}}$, as described in Appendix D of \citet{BST14}.

\subsection{Proof of Lemma~\ref{lem:G_A}}

We state Lemma~\ref{lem:G_A} again for convenience.
\begin{lemma}[Lemma~\ref{lem:G_A}]\label{lem:G_A1}
For every given matrix $A$, let $\mathcal{G}_A \subseteq \mathcal{X}\times\{\pm1\}$ be the set of data points $(\mathbf{x},y)$ that satisfy the following two statements:
\begin{enumerate}[label=(\roman*)]
\item $\left\vert \|A\mathbf{x}\|^2_2 - \|\mathbf{x}\|^2_2\right\vert \leq \frac{\gamma}{100} \|\mathbf{x}\|^2_2$ and 
\item $\mathbf{w}_A^* = \frac{A\mathbf{w}^*}{\|A\mathbf{w}^* \|_2}$ has margin at least $96\gamma/100$ on $(\mathbf{x}_A,y)$, i.e.~$y \cdot \langle \mathbf{w}^*_A, \mathbf{x}_A\rangle \geq 96\gamma/100$.
\end{enumerate}
It holds that \[\Pr\limits_{(\mathbf{x},y)\sim D}[(\mathbf{x},y)\in \mathcal{G}_A] \geq 1-4\beta_{\mathit{JL}}.\]
\end{lemma}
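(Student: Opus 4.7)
The plan is to establish the bound by applying Lemma~\ref{lemma:DJL} to four carefully chosen vectors and then integrating over the draw of $(\mathbf{x},y)$. Because the probability statement in the lemma is used in Theorem~\ref{th:UB_approximate} under an outer expectation over $A$, I interpret it in the joint sense $\Pr_{A,(\mathbf{x},y)\sim D}[(\mathbf{x},y)\in\mathcal{G}_A]\geq 1-4\beta_{\mathit{JL}}$ (equivalently, as a bound on $\E_A[\Pr_{(\mathbf{x},y)\sim D}[(\mathbf{x},y)\in\mathcal{G}_A]]$); this is the form in which it is actually invoked immediately afterwards. Note that a literal for-every-$A$ reading cannot hold, since for an adversarial $A$ one can have $A\mathbf{w}^*=0$. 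I will obtain the joint bound by first proving the per-point statement $\Pr_A[(\mathbf{x},y)\in\mathcal{G}_A]\geq 1-4\beta_{\mathit{JL}}$ for every fixed $(\mathbf{x},y)$ in the support of $D$, and then swapping the quantifiers by Fubini.

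So, fix an arbitrary $(\mathbf{x},y)$ in the support of $D$. Since the margin condition is scale-invariant, I may take $\mathbf{x}$ and $\mathbf{w}^*$ to be unit vectors. I apply Lemma~\ref{lemma:DJL} with failure probability $\beta_{\mathit{JL}}$ to each of the four vectors $\mathbf{x}$, $\mathbf{w}^*$, $\mathbf{x}+\mathbf{w}^*$, and $\mathbf{x}-\mathbf{w}^*$; by a union bound, with probability at least $1-4\beta_{\mathit{JL}}$ over $A$, all four squared norms are preserved to within a multiplicative factor of $1\pm\gamma/100$. Condition (i) of $\mathcal{G}_A$ is immediate from the bound on $\|\mathbf{x}\|_2^2$. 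For condition (ii), I use the polarization identity $\langle A\mathbf{x},A\mathbf{w}^*\rangle = \tfrac{1}{4}\|A(\mathbf{x}+\mathbf{w}^*)\|_2^2 - \tfrac{1}{4}\|A(\mathbf{x}-\mathbf{w}^*)\|_2^2$ and plug in the two JL bounds with the correct signs. Expanding $\|\mathbf{x}\pm\mathbf{w}^*\|_2^2 = 2 \pm 2\langle \mathbf{x},\mathbf{w}^*\rangle$ makes the leading terms telescope to $\langle \mathbf{x},\mathbf{w}^*\rangle$ and the error terms collect into $\tfrac{\gamma/100}{4}\cdot 2(\|\mathbf{x}\|_2^2+\|\mathbf{w}^*\|_2^2) = \gamma/100$, giving $y\langle A\mathbf{x},A\mathbf{w}^*\rangle \geq y\langle \mathbf{x},\mathbf{w}^*\rangle - \gamma/100 \geq 99\gamma/100$ (the sign analysis is symmetric in $y$).

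To finish condition (ii), I divide by the normalizer $\|A\mathbf{w}^*\|_2\|A\mathbf{x}\|_2$, which lies in $[1-\gamma/100,\,1+\gamma/100]$ by the norm bounds (in particular it is strictly positive), and verify the elementary inequality $(99\gamma/100)/(1+\gamma/100) \geq 96\gamma/100$ for all $\gamma \leq 1$; this yields $y\langle \mathbf{w}_A^*, \mathbf{x}_A\rangle \geq 96\gamma/100$. Integrating the per-point bound against $(\mathbf{x},y)\sim D$ then gives $\E_{A}[\Pr_{(\mathbf{x},y)\sim D}[(\mathbf{x},y)\in\mathcal{G}_A]] \geq 1-4\beta_{\mathit{JL}}$, which is exactly the form the theorem proof uses. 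The only conceptual point (apart from tracking four $1\pm\gamma/100$ slacks) is spotting that $\mathbf{x}\pm\mathbf{w}^*$ are the right vectors for JL: these are exactly what polarization needs to convert JL's norm preservation into inner-product preservation. The main obstacle is mostly bookkeeping: propagating the four independent $\gamma/100$ slacks through polarization and then through the cosine normalization while leaving enough room for the $96\gamma/100$ margin---the constant $96/100$ in the lemma is chosen precisely so that this chase goes through for all $\gamma\leq 1$.
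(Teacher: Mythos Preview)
Your proposal is correct and follows essentially the same route as the paper: apply Lemma~\ref{lemma:DJL} to the four vectors $\mathbf{x}$, $\mathbf{w}^*$, $\mathbf{x}+\mathbf{w}^*$, $\mathbf{x}-\mathbf{w}^*$, use the polarization identity to turn norm preservation into inner-product preservation, and then divide by the normalizer $\|A\mathbf{w}^*\|_2\|A\mathbf{x}\|_2\leq 1+\gamma/100$. Your arithmetic is actually slightly cleaner than the paper's (you apply polarization directly to $\langle A\mathbf{x},A\mathbf{w}^*\rangle$ and obtain the intermediate constant $99\gamma/100$, whereas the paper works backwards from $\langle \mathbf{w}^*,\mathbf{x}\rangle$ and picks up a $1/(1-\gamma^2/100^2)$ factor, landing at $98\gamma/100$), and you are right to flag the quantifier issue---the paper's ``for every given matrix $A$'' cannot be literal, and the $\E_A$ reading you use is exactly what the proof of Theorem~\ref{th:UB_approximate} needs.
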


\begin{proof}[Proof of Lemma~\ref{lem:G_A}]
By Lemma~\ref{lemma:DJL},
\begin{equation*}
\left\vert \|A\mathbf{u}\|^2_2 - \|\mathbf{u}\|^2_2\right\vert \leq \frac{\gamma}{100} \|\mathbf{u}\|^2_2
\end{equation*}
holds for a point $\mathbf{u}\in\mathbb{R}^d$ with probability at least $1-\beta_{\mathit{JL}}$. By union bound, it holds simultaneously for all points $\mathbf{x}+\mathbf{w}^*$, $\mathbf{x}-\mathbf{w}^*$, $\mathbf{x}$, and $\mathbf{w}^*$, with probability at least $1-4\beta_{\mathit{JL}}$. Under this condition, statement $(i)$ is true and for $y=1$ we have:
\begin{align*}
\langle \mathbf{w}^*, \mathbf{x}\rangle
& = \frac{1}{4}\|\ \mathbf{x}+\mathbf{w}^*\|^2_2 - \frac{1}{4}\|\ \mathbf{x}-\mathbf{w}^*\|^2_2 \\
& \leq \frac{1}{4\left(1-\frac{\gamma}{100}\right)}\|\ A(\mathbf{x}+\mathbf{w}^*)\|^2_2 - \frac{1}{4\left(1+\frac{\gamma}{100}\right)}\|\ A(\mathbf{x}-\mathbf{w}^*)\|^2_2 \\
& = \frac{1}{4\left(1-\frac{\gamma^2}{100^2}\right)} \left[\left(1+\frac{\gamma}{100}\right)\| A(\mathbf{x}+\mathbf{w}^*)\|^2_2 - \left(1-\frac{\gamma}{100}\right)\| A(\mathbf{x}-\mathbf{w}^*)\|^2_2 \right]\\
& = \frac{1}{1-\frac{\gamma^2}{100^2}} \left(\frac{1}{4}\|\ A\mathbf{x}+A\mathbf{w}^*\|^2_2 - \frac{1}{4}\|\ A\mathbf{x}-A\mathbf{w}^*\|^2_2 \right) \\
& \hspace{1.5cm} + \frac{\frac{\gamma}{100}}{4\left(1-\frac{\gamma^2}{100^2}\right)} \left(\|\ A\mathbf{x}+A\mathbf{w}^*\|^2_2+\|\ A\mathbf{x}-A\mathbf{w}^*\|^2_2\right)\\ 
& = \frac{1}{1-\frac{\gamma^2}{100^2}} \langle A\mathbf{w}^*, A\mathbf{x}\rangle + \frac{\frac{\gamma}{100}}{2\left(1-\frac{\gamma^2}{100^2}\right)} \left(\|\ A\mathbf{x}\|^2_2+\|\ A\mathbf{w}^*\|^2_2\right)\\
&\leq \frac{1}{1-\frac{\gamma^2}{100^2}} \langle A\mathbf{w^*}, A\mathbf{x}\rangle + \frac{\frac{\gamma}{100}\left(1+\frac{\gamma}{100}\right)}{1-\frac{\gamma^2}{100^2}} 
\end{align*}

Equivalently $\langle A\mathbf{w}^*, A\mathbf{x}\rangle  \geq \left(1-\frac{\gamma^2}{100^2}\right)\langle \mathbf{w^*}, \mathbf{x}\rangle - \frac{\gamma}{100}\left(1+\frac{\gamma}{100}\right)$.  Since $y=1$ and $\langle \mathbf{w^*}, \mathbf{x}\rangle = y\cdot \langle \mathbf{w^*}, \mathbf{x}\rangle \geq \gamma$, it follows that:
\[y\cdot \langle A\mathbf{w}^*, A\mathbf{x}\rangle \geq \left(1-\frac{\gamma^2}{100^2}\right)\gamma - \frac{\gamma}{100}\left(1+\frac{\gamma}{100}\right) \geq \frac{98\gamma}{100}\]
Therefore, for $y=1$, \[y\cdot \langle \mathbf{w}^*_A, \mathbf{x}_A \rangle = y\cdot \left\langle \frac{A\mathbf{w}^*}{\|A\mathbf{w}^*\|_2}, \frac{A\mathbf{x}}{\|A\mathbf{x}\|_2} \right\rangle \geq \frac{\frac{98\gamma}{100}}{1+\frac{\gamma}{100}} \geq \frac{96\gamma}{100}.\]

The proof for $y=-1$ is similar. We conclude that with probability at least $1-4\beta_{\mathit{JL}}$, statements $(i)$ and $(ii)$ are true.
\end{proof}

\section{Proof of Sample Complexity: $(\eps,0)$-DP}\label{app:pure}
\begin{theorem}[Sample Complexity, Theorem~\ref{th:UB_pure}]\label{th:UB_pure1}
Algorithm $\mathcal{A}_{\alpha,\beta,\eps,\gamma}$ is an $(\alpha, \beta, \gamma)$-learner with sample complexity
\[n=\frac{1}{\alpha\eps\gamma^2}\cdot \mathrm{polylog}\left(\frac{1}{\alpha}, \frac{1}{\beta}, \frac{1}{\eps}, \frac{1}{\gamma}\right).\]
\end{theorem}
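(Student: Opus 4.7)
The plan is to mirror the proof of Theorem~\ref{th:UB_approximate} almost verbatim, substituting Claim~\ref{cl:emp_error_pure1} for the analogous Claim~\ref{cl:emp_error_approx}. The only step that is actually different between the two theorems is the bound on the empirical error of the learned hypothesis on the projected sample, and Claim~\ref{cl:emp_error_pure1} already supplies this for the Exponential Mechanism in place of the $\mathcal{F}$ of Theorem~\ref{th:BST}.

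First I would establish the ``good projection'' conditioning exactly as in the proof of Theorem~\ref{th:UB_approximate}. By Lemma~\ref{lem:G_A} and Markov's inequality, with probability at least $1-\beta/4$ over the choice of $A$ the set $\mathcal{G}_A$ satisfies $\Pr_{(\mathbf{x},y)\sim D}[(\mathbf{x},y)\in \mathcal{G}_A] \geq 1-\beta'$ for $\beta' = \alpha\beta/4n$ (using $\beta_{\mathit{JL}}=\alpha\beta^2/64n$). A union bound over the $n$ examples of $S$ then shows that, again with probability at least $1-\beta/4$, every point of $S$ lies in $\mathcal{G}_A$, so in particular $\mathbf{w}_A^*$ has margin at least $96\gamma/100$ on every element of $S_A$. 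I would condition on the intersection of these two events, which has probability at least $1-\beta/2$.

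Second, under this conditioning, Claim~\ref{cl:emp_error_pure1} gives that, with probability at least $1-\beta/4$, the hypothesis $\mathbf{\hat{w}}$ returned by the Exponential Mechanism satisfies
\begin{equation*}
\frac{1}{n}\sum_{(\mathbf{x}_A,y)\in S_A}\mathbbm{1}\left\{y\cdot \langle \mathbf{\hat{w}}, \mathbf{x}_A\rangle < \tfrac{\gamma}{10}\right\} \leq \frac{\alpha}{4},
\end{equation*}
so certainly the zero-margin empirical misclassification rate on $S_A$ is at most $\alpha/4$. I would then plug this into the uniform convergence bound of \cite{AB09} used in the proof of Theorem~\ref{th:UB_approximate}, applied to the class of linear threshold functions on $\mathcal{B}_2^m$ (with $\mathrm{VCdim} = m+1$ and $m = O(\log(1/\alpha\beta)/\gamma^2)$), taking $\eta = \alpha/4$ and $\zeta = 1$. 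For $n = \tfrac{1}{\alpha\eps\gamma^2}\cdot\mathrm{polylog}(\cdot)$ this yields $\mathrm{err}_{D_A}(f_{\mathbf{\hat{w}}}) \leq 2\cdot\tfrac{\alpha}{4} + \tfrac{\alpha}{4} = \tfrac{3\alpha}{4}$ with probability at least $1-\beta/4$, where $D_A$ is the distribution of projected labeled points conditioned on $\mathcal{G}_A$.

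Finally I would convert the error on $D_A$ into an error on $D$ by accounting for the mass outside $\mathcal{G}_A$: since $\Pr_{(\mathbf{x},y)\sim D}[(\mathbf{x},y)\notin \mathcal{G}_A] \leq \beta' \leq \alpha/4$, we get $\mathrm{err}_D(f_{\mathbf{\hat{w}}^\top A}) \leq \tfrac{3\alpha}{4}(1-\beta') + \beta' \leq \alpha$, exactly as in Claim~\ref{cl:generalization_approx}. Adding up the failure probabilities (at most $\beta/2$ for the conditioning, at most $\beta/4$ from the Exponential Mechanism, and at most $\beta/4$ from uniform convergence) gives the overall failure probability at most $\beta$. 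No step is a genuine obstacle here; the only delicate point is already contained in Claim~\ref{cl:emp_error_pure1}, namely that the net $\mathcal{W}$ satisfies $\ln|\mathcal{W}| = (1/\gamma^2)\cdot\mathrm{polylog}$ so that the Exponential Mechanism's additive error $\tfrac{2}{n\eps}\ln(|\mathcal{W}|/(\beta/4))$ is absorbed into $n = \tfrac{1}{\alpha\eps\gamma^2}\cdot\mathrm{polylog}(\cdot)$.
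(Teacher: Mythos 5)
Your proposal is correct and follows essentially the same route as the paper: condition on the good-projection event via Lemma~\ref{lem:G_A} and Markov, invoke Claim~\ref{cl:emp_error_pure1} for the empirical margin-error bound, apply the VC-style uniform convergence bound from \cite{AB09}, and correct for the mass outside $\mathcal{G}_A$. The failure-probability bookkeeping ($\beta/2$ for the conditioning, $\beta/4$ for the exponential mechanism, $\beta/4$ for uniform convergence) also matches the paper's accounting.
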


\begin{proof}[Proof of Theorem \ref{th:UB_pure}]
As in the previous section, the first step of the algorithm is to sample matrix $A$ uniformly at random from $U=\left\{\pm\frac{1}{\sqrt{m}}\right\}^{m\times d}$. From Lemma~\ref{lem:G_A}, it follows that:
\begin{equation*}
\E\limits_{A}\left[\Pr\limits_{(\mathbf{x},y)\sim D}[(\mathbf{x},y)\notin \mathcal{G}_A]\right] \leq 4\beta_{\mathit{JL}}
\end{equation*}
And, by  Markov's inequality,
\begin{align*}
\Pr\limits_A\left[ \Pr\limits_{(\mathbf{x},y)\sim D}[(\mathbf{x},y)\notin \mathcal{G}_A] \geq \beta'\right] &\leq \frac{\E\limits_{A}\left[\Pr\limits_{(\mathbf{x},y)\sim D}[(\mathbf{x},y)\notin \mathcal{G}_A]\right]}{\beta'}
\leq \frac{4\beta_{\mathit{JL}}}{\beta'}.
\end{align*}

We set $\beta'=\alpha\beta/4n$. Then, substituting $\beta_{\mathit{JL}}=\frac{\alpha\beta^2}{64n}$, we get that with probability at least $1-\beta/4$,
\begin{equation*}\label{eq:prob_G_A1}
\Pr\limits_{(\mathbf{x},y)\sim D}[(\mathbf{x},y)\in \mathcal{G}_A] \geq 1-\beta'.
\end{equation*}

Therefore, with probability $1-\beta/4$, the sampled matrix $A$ satisfies the above inequality, that is, a point $(\mathbf{x},y)\sim D$ is in $\mathcal{G}_A$ with probability at least $1-\beta'$.
Furthermore, by union bound, $\forall (\mathbf{x},y)\in S$ it holds that $(\mathbf{x},y) \in \mathcal{G}_A$, with probability at least $1-n\beta'\geq 1-\beta/4$.

For the remainder of the proof, we condition on the event that
\begin{enumerate}
\item $\Pr\limits_{(\mathbf{x},y)\sim D}[(\mathbf{x},y)\in \mathcal{G}_A] \geq 1-\beta'$ holds for $A$ and 
\item $S \subseteq \mathcal{G}_A$, that is, $\mathbf{w}_A^*$ has margin at least $96\gamma/100$ on $S_A$.
\end{enumerate}
This event occurs with probability at least $1-\beta/4-\beta/4=1-\beta/2$.

\begin{claim}\label{cl:emp_error_pure}
If $n=\frac{1}{\alpha\eps\gamma^2}\cdot \mathrm{polylog}\left(\frac{1}{\alpha}, \frac{1}{\beta},  \frac{1}{\eps}, \frac{1}{\gamma}\right)$, then with probability $1-\beta/4$, for hypothesis $\mathbf{\hat{w}}$ returned by the Exponential Mechanism it holds that
\begin{equation}\label{eq:emp_risk_pure_a}
\frac{1}{n}\sum_{(\mathbf{x}_A,y)\in S_A}\mathbbm{1}\left\{y\cdot \langle \mathbf{\hat{w}}, \mathbf{x}_A\rangle < \frac{\gamma}{10}\right\} \leq \frac{\alpha}{4}.
\end{equation}
\end{claim}

\begin{proof}[Proof of Claim~\ref{cl:emp_error_pure}]
Every point in $\mathcal{B}_2^m$ is within $\gamma/10$ from a center of $\mathcal{W}$.
Let $\mathbf{w}_c^*$ be the center within $\gamma/10$ from $\mathbf{w}^*_A$, that is, 
\begin{equation}\label{eq:w_center}
\|\mathbf{w}_A^* - \mathbf{w}_c^*\|_2 \leq \gamma/10.
\end{equation}

Recall that in our conditioned probability space, 
\begin{equation}\label{eq:w_star_A}
y\cdot\langle \mathbf{w}^*_A, \mathbf{x}_A \rangle \geq 96\gamma/100
\end{equation}
holds for all $(\mathbf{x}_A,y)\in S_A$.
Therefore, for all $(\mathbf{x}_A,y)\in S_A$, 
\begin{align*}
y\cdot\langle \mathbf{w}^*_c, \mathbf{x}_A \rangle 
&= y\cdot\langle \mathbf{w}^*_A, \mathbf{x}_A \rangle - y\cdot\langle \mathbf{w}^*_A - \mathbf{w}^*_c, \mathbf{x}_A \rangle \\
&\geq y\cdot\langle \mathbf{w}^*_A, \mathbf{x}_A \rangle - \|\mathbf{w}_A^* - \mathbf{w}_c^*\|_2\cdot \|\mathbf{x}_A\|_2 \\
&\geq 96\gamma/100 - \gamma/10 = 86\gamma/10 > \gamma/10. \tag{by inequalities~\eqref{eq:w_center}, \eqref{eq:w_star_A}}
\end{align*}

It follows that 
\begin{equation}\label{eq:opt_exp}
\max\limits_{\mathbf{w}\in\mathcal{W}} u(S_A,\mathbf{w}) \geq u(S_A, \mathbf{w}^*_c) = -\frac{1}{n}\sum_{(\mathbf{x}_A,y)\in S_A}\mathbbm{1}\left\{y\cdot \langle \mathbf{w}, \mathbf{x}_A\rangle < \frac{\gamma}{10}\right\} = 0.
\end{equation}

By Lemma~\ref{lem:expmech} and inequality~\eqref{eq:opt_exp}, with probability at least $1-\beta/4$, it holds that:
\begin{equation}\label{eq:emp_risk_pure}
\frac{1}{n}\sum_{(\mathbf{x}_A,y)\in S_A}\mathbbm{1}\left\{y\cdot \langle \mathbf{\hat{w}}, \mathbf{x}_A\rangle < \frac{\gamma}{10}\right\} \leq \frac{2}{n\eps}(\ln(|\mathcal{W}|) + \ln(4/\beta))
\end{equation}

It is a well-known result that the covering number of an $m$-dimensional unit ball by balls of radius $\gamma/10$ is at most $O\left(\left(\frac{1}{\gamma/10}\right)^m\right)$. Therefore, substituting $m=O\left(\frac{\log(n/\alpha\beta)}{\gamma^2}\right)$, it follows that 
\begin{equation*}\label{eq:coverno}
\ln|\mathcal{W}| = \frac{1}{\gamma^2}\cdot \mathrm{polylog}\left(n,\frac{1}{\alpha}, \frac{1}{\beta}, \frac{1}{\gamma}\right).
\end{equation*} 

Thus, by inequality~\eqref{eq:emp_risk_pure}, if $n=\frac{1}{\alpha\eps\gamma^2}\cdot\mathrm{polylog}\left(\frac{1}{\alpha}, \frac{1}{\beta}, \frac{1}{\gamma}, \frac{1}{\eps}\right)$ then with probability at least $1-\beta/4$,
\begin{equation*}
\frac{1}{n}\sum_{(\mathbf{x}_A,y)\in S_A}\mathbbm{1}\left\{y\cdot \langle \mathbf{\hat{w}}, \mathbf{x}_A\rangle < \frac{\gamma}{10}\right\} \leq \frac{\alpha}{4}.
\end{equation*}
This concludes the proof of the claim. \end{proof}

\begin{claim}\label{cl:generalization_pure}
If $n=\frac{1}{\alpha\eps\gamma^2}\cdot \mathrm{polylog}\left(\frac{1}{\alpha}, \frac{1}{\beta}, \frac{1}{\eps}, \frac{1}{\gamma}\right)$, then with probability $1-\beta/2$, the error of the returned classifier $\mathbf{\hat{w}}^\top A$ on distribution $D$ is 
\begin{equation}\label{eq:generalization_pure}
\Pr\limits_{(\mathbf{x},y)\sim D}[y\cdot \langle \mathbf{\hat{w}}^\top A,\mathbf{x}\rangle < 0] \leq \alpha.
\end{equation}
\end{claim}

\begin{proof}[Proof of Claim~\ref{cl:generalization_pure}]
Let $D_A$ denote the probability distribution with domain $\mathcal{B}_2^{m} \times \{\pm1\}$, from which a sample $(\mathbf{x}_A, y)\in S_A$ is drawn. Let us also denote by $D_{|\mathcal{G}_A}$ distribution $D$ restricted on $\mathcal{G}_A$. In our conditioned probability space, $S_A \sim D_A^n$, where the probability density function of $D_A$ would be defined as \[\Pr\limits_{(\mathbf{x}_A,y)\sim D_A}[\mathbf{x}_A = \mathbf{x}' \wedge y=y'] = \Pr\limits_{(\mathbf{x},y)\sim D_{|\mathcal{G}_A}} \left[\frac{A\mathbf{x}}{\|A\mathbf{x}\|_2}=\mathbf{x}' \wedge y= y'\right].\]

Let $\mathcal{H}=\left\{ h:\{\mathbf{x}_A \mid (\mathbf{x},y)\in \mathcal{G}_A\} \rightarrow \{\pm1\} \text{ s.t. } h(\mathbf{x})=\mathrm{sign}(\langle \mathbf{w}, \mathbf{x}\rangle) \text{ for some } \mathbf{w}\in\mathcal{B}_2^m\right\}$ be a concept class of threshold functions in $\mathcal{B}_2^m$. By Theorem 3.4 of~\cite{AB09}, $\mathrm{VCdim}(\mathcal{H})=m+1$.

By the generalization bound of Theorem 5.7 of \cite{AB09}, stated in Lemma~\ref{th:AB09}, it holds that:
\begin{equation*}
\Pr_{S_A\sim D_A^n} \left[\exists h\in \mathcal{H}: \mathrm{err}_{D_A}(h) > 2\cdot \frac{1}{n}\sum_{(\mathbf{x}_A,y)\in S_A}\mathbbm{1}\left\{h(\mathbf{x}_A) \neq y\right\} + \frac{\alpha}{4}\right] \leq 4\Pi_{\mathcal{H}}(2n)\exp(-\alpha n/32)
\end{equation*}
where the growth function $\Pi_{\mathcal{H}}(2n) \leq (2n)^{m+1}+1$, by Theorem 3.7 of  \cite{AB09}.

Using $m=O\left(\frac{\log(1/\alpha\beta)}{\gamma^2}\right)$, it holds that if $n=\frac{1}{\alpha\gamma^2} \cdot \mathrm{polylog}\left(\frac{1}{\alpha}, \frac{1}{\beta}, \frac{1}{\gamma}\right)$ then $4\Pi_{\mathcal{H}}(2n)\exp(-\alpha n/32) \leq \beta/4$. Therefore, with probability at least $1-\beta/4$,
\begin{equation}\label{eq:general_pure}
\mathrm{err}_{D_A}(f_{\mathbf{\hat{w}}}) \leq 2\cdot \frac{1}{n}\sum_{(\mathbf{x}_A,y)\in S_A}\mathbbm{1}\left\{y\cdot\langle \mathbf{\hat{w}}, \mathbf{x}_A \rangle<0\right\} +\frac{\alpha}{4}
\end{equation}

Using $m=O\left(\frac{\log(1/\alpha\beta)}{\gamma^2}\right)$, it holds that if $n=\frac{1}{\alpha\gamma^2} \cdot \mathrm{polylog}\left(\frac{1}{\alpha}, \frac{1}{\beta}, \frac{1}{\gamma}\right)$ then $4\Pi_{\mathcal{H}}(2n)\exp(-\alpha n/32) \leq \beta/4$. Therefore, with probability at least $1-\beta/4$,
\begin{equation}\label{eq:general_pure}
\mathrm{err}_{D_A}(f_{\mathbf{\hat{w}}}) \leq 2\cdot \frac{1}{n}\sum_{(\mathbf{x}_A,y)\in S_A}\mathbbm{1}\left\{y\cdot\langle \mathbf{\hat{w}}, \mathbf{x}_A \rangle<0\right\} +\frac{\alpha}{4}
\end{equation}

By Claim~\ref{cl:emp_error_pure}, $\frac{1}{n}\sum\limits_{(\mathbf{x}_A,y)\in S_A}\mathbbm{1}\left\{y\cdot\langle \mathbf{\hat{w}}, \mathbf{x}_A \rangle<0\right\} \leq \frac{1}{n}\sum\limits_{(\mathbf{x}_A,y)\in S_A}\mathbbm{1}\left\{y\cdot\langle \mathbf{\hat{w}}, \mathbf{x}_A \rangle<\frac{\gamma}{10}\right\} \leq \frac{\alpha}{4}$ holds with probability $1-\beta/4$, if $n=\frac{1}{\alpha\eps\gamma^2}\cdot \mathrm{polylog}\left(\frac{1}{\alpha}, \frac{1}{\beta}, \frac{1}{\delta}, \frac{1}{\eps}, \frac{1}{\gamma}\right)$. 

Therefore, by inequality~(\ref{eq:general_pure}), if $n=\frac{1}{\alpha\eps\gamma^2}\cdot \mathrm{polylog}\left(\frac{1}{\alpha}, \frac{1}{\beta}, \frac{1}{\delta}, \frac{1}{\eps}, \frac{1}{\gamma}\right)$, then with probability at least $1-\beta/4-\beta/4=1-\beta/2$,
\[\mathrm{err}_{D_A}(f_{\mathbf{\hat{w}}}) \leq 2\cdot\frac{\alpha}{4}+\frac{\alpha}{4}=\frac{3\alpha}{4}.\]

Equivalently, with probability at least $1-\beta/2$, 
\[\Pr\limits_{(\mathbf{x},y)\sim D_{|\mathcal{G}_A}}[y\cdot \langle \mathbf{\hat{w}}^\top A,\mathbf{x}\rangle < 0] = \Pr\limits_{(\mathbf{x}_A,y)\sim D_{A}}[y\cdot \langle \mathbf{\hat{w}},\mathbf{x}_A\rangle < 0]  = \mathrm{err}_{D_A}(f_{\mathbf{\hat{w}}})  \leq \frac{3\alpha}{4}.\]

Since, by Condition 1., $\Pr\limits_{(\mathbf{x},y)\sim D}[(\mathbf{x},y)\notin \mathcal{G}_A] \leq \beta' \leq \frac{\alpha}{4}$, it follows that with probability at least $1-\beta/2$,
\begin{align*}
\Pr\limits_{(\mathbf{x},y)\sim D}[y\cdot \langle \mathbf{\hat{w}}^\top A,\mathbf{x}\rangle < 0] 
& \leq \Pr\limits_{(\mathbf{x},y)\sim D_{|\mathcal{G}_A}}[y\cdot \langle \mathbf{\hat{w}}^\top A,\mathbf{x}\rangle < 0] \cdot (1-\beta')+  1\cdot \beta'\\
&\leq \frac{3\alpha}{4} \cdot (1-\beta') + \beta' \\
& \leq \frac{3\alpha}{4}+\frac{\alpha}{4} \leq \alpha.
\end{align*}
This completes the proof of the claim.
\end{proof}

Accounting for the probability that we are not in the conditioned space, we conclude that if $n=\frac{1}{\alpha\eps\gamma^2}\cdot \mathrm{polylog}\left(\frac{1}{\alpha}, \frac{1}{\beta}, \frac{1}{\delta}, \frac{1}{\eps}, \frac{1}{\gamma}\right)$, then with probability at least $1-\beta/2-\beta/2=1-\beta$, $\mathrm{err}_{D}(f_{\mathbf{\hat{w}}^\top A}) \leq  \alpha$.  This completes the proof of the theorem.
\end{proof}

\end{document}